\documentclass[11pt]{article}

\usepackage[utf8]{inputenc} 
\usepackage[T1]{fontenc}    

\usepackage{amsmath,amsthm}
\usepackage{subcaption}
\usepackage{newtxtext}
\usepackage{newtxmath}
\usepackage{bm}
\usepackage{natbib}
\usepackage{booktabs}       
\usepackage{nicefrac}       
\usepackage{microtype}      
\usepackage{xcolor}         
\usepackage{comment}
\usepackage[algoruled]{algorithm2e}
\usepackage{multirow}
\usepackage{array}
\usepackage[margin=1in]{geometry}
\usepackage{graphicx}

\newcommand{\Rb}{\mathbb{R}}      
\newcommand{\Pb}{\mathbb{P}}
\newcommand{\Qb}{\mathbb{Q}}
\newcommand{\Sb}{\mathbb{S}}
\newcommand{\Ab}{\mathbb{A}}
\newcommand{\Xb}{\mathbb{X}}

\newcommand{\Vb}{\mathbb{V}}
\newcommand{\Zb}{\mathbb{Z}}
\newcommand{\Ac}{\mathcal{A}}

\newcommand{\Eb}{\mathbb{E}}
\newcommand{\Fc}{\mathcal{F}}

\newcommand{\Nc}{\mathcal{N}}
\newcommand{\Oc}{\mathcal{O}}
\newcommand{\Pc}{\mathcal{P}}

\newcommand{\1}{\varmathbb{1}}
\newcommand{\0}{\varmathbb{0}}

\newcommand{\argmin}{\mathop{\rm argmin}}
\newcommand{\eqdef}{\triangleq}
\newcommand{\D}{\mathop{\text{\rm d}\!}}

\newcommand{\Cdot}{\,\cdot\,}

\newenvironment{tightitemize}{%
    \list{{\textup{$\bullet$}}}{\settowidth\labelwidth{{\textup{\qquad}}}
    \leftmargin\labelwidth \advance\leftmargin\labelsep
    \parsep 0pt plus 1pt minus 1pt \topsep 3pt \itemsep 3pt
    }}{\endlist}

 \makeatletter
\providecommand*\rel@kern[1]{\kern#1\dimexpr\macc@kerna}

\renewcommand*\widebar[1]{%
  \begingroup
  \def\mathaccent##1##2{%
    \rel@kern{0.8}%
    \overline{\rel@kern{-0.8}\macc@nucleus\rel@kern{0.2}}%
    \rel@kern{-0.2}%
  }%
  \macc@depth\@ne
  \let\math@bgroup\@empty \let\math@egroup\macc@set@skewchar
  \mathsurround\z@ \frozen@everymath{\mathgroup\macc@group\relax}%
  \macc@set@skewchar\relax
  \let\mathaccentV\macc@nested@a
  \macc@nested@a\relax111{#1}%
  \endgroup
}
\makeatother

\newtheorem{theorem}{Theorem}
\newtheorem{definition}[theorem]{Definition}

\newtheorem{lemma}[theorem]{Lemma}

\title{Reinforcement Learning
with Markov Risk Measures\\
and Multipattern Risk Approximation}

\author{Andrzej Ruszczy\'nski  and Tiangang Zhang\footnote{Department of Management Science and Information Systems, Rutgers University, email: rusz@rutgers.edu; tiangangzh@gmail.com}
}

\date{December 1, 2025}

\begin{document}

\maketitle

\begin{abstract}
For a risk-averse finite-horizon Markov Decision Problem, we introduce a special class of Markov coherent risk measures, called mini-batch measures. We also define the class of multipattern risk-averse  problems that generalizes the class of linear systems. We use both concepts in a feature-based $Q$-learning method with multipattern $Q$-factor approximation and we
prove a high-probability regret bound of $\mathcal{O}\big(H^2 N^H \sqrt{ K}\big)$, where $H$ is the horizon, $N$ is the mini-batch size, and  $K$ is the number of episodes. We also propose an economical version of
the $Q$-learning method that streamlines the policy evaluation (backward) step. The theoretical results are illustrated on a stochastic assignment problem and a short-horizon multi-armed
bandit problem.\\
\emph{Keywords:} Reinforcement Learning, Dynamic Programming, Risk, Function Approximation
\end{abstract}

\section{Introduction}

We consider a finite-horizon Markov Decision Process (MDP) with the state space $\Xb$, the action space $\Ab$,
the controlled transition kernels $\Pb_h:\Xb\times\Ab \to \Pc(\Xb)$, $h\in[1:H]$,
and cost functions $r_h:\Xb\times\Ab \to [0,1]$, $h\in [1:H-1]$,  $r_H:\Xb\to[0,1]$. We use the notation $[1:H]$ for $\{1,\dots,H\}$. Here, $H$ is the horizon, and $\Pc(\Xb)$ denotes the set of probability distributions on $\Xb$.
Thus, at stage~$h$ and state $x_h\in \Xb$, an action $a\in \Ab$ results in a random transition to a new  state, with probability distribution $\Pb_h(x_h,a)$.
Generally, for each state $x\in \Xb$ and each stage $h\in [1:H]$, we may have a feasible action set $A_h(x) \subset \Ab$, but for the simplicity of exposition we allow all actions;
our results apply to the general case as well. We assume that the spaces $\Xb$ and $\Ab$
are finite, but large, so that direct enumeration of all state--action pairs is intractable.

Instead of the standard expected-value objective, we are considering a risk-averse model with a Markov risk measure $\rho_{1,H}\big(r_1(x_1,a_1),\dots, r_{H-1}(x_{H-1},a_{H-1}),r_H(x_H)\big)$
used to evaluate the quality of the control policy. Our objective is to minimize it over the space of all feasible policies. Our choice of the minimization formulation is motivated by the convenience of dealing with risk measures and their convexity, but all our results can be easily translated to the maximization setting, with concave risk measures.

Three difficulties are associated with this general setting.  First,  risk measures are nonlinear with respect to the probability measure, and their statistical estimation
is hard; only special approaches for particular cases exist.
To overcome this difficulty we introduce a new class of dynamic measures of risk that is more amenable to
learning in a setting with a generative model.  Secondly, for very large state and action spaces, the tabular approach is infeasible. Therefore,
to facilitate the generalization of the risk estimates to unseen state--action pairs, we adopt the  $Q$-value  approximation approach, well-developed in the risk-neutral MDP literature. This gives rise to the
next difficulty resulting from the nonlinear dependence of risk on the probability measure (for a fixed value function) and
the nonlinear dependence on the value function for a fixed probability measure. We
address it by generalizing the class of linear MDPs to a broader class of multipattern MDPs.We propose a risk-averse version of the Least Squares Value Iteration method  and we
prove a high-probability regret bound.

Tabular reinforcement learning for risk-neutral (expected value) problems
was studied by numerous authors, both model-based  \citep{auer2008near,azar2017minimax,dann2017unifying},
and model-free \citep{jin2018q,strehl2009reinforcement}. Generative model settings were discussed, among others, by
\cite{lattimore2012pac,sidford2018near}.
 The linear value function approximation approaches to MDPs have a long history; see
 \citep{bradtke1996linear,sutton1988learning,tsitsiklis1997analysis,Sutton1998,melo2007q}
and many references therein. The concept of a linear (mixture) MDP \citep{bradtke1996linear,melo2007q}, originating from the theory of linear bandits \citep{bubeck2012regret,lattimore2020bandit},  is pertinent in this setting.
It has been recently used by
\cite{ayoub2020model,yang2020reinforcement,jin2020provably,modi2020sample} to develop complexity bounds for expected value RL algorithms.
The generalization to bilinear models by \cite{du2021bilinear} extends the applicability of this class. The state aggregation approaches of \cite{dong2019provably} are also covered by this model.

The general theory of dynamic risk is discussed by
 \cite{Scandolo:2003,CDK:2006,ADEHK:2007,cheridito2011composition}
 and the references therein. Our approach uses Markov dynamic risk measures developed by  \cite{Ruszczynski2010Markov,shen2013risk,lin2013dynamic,bauerle2022markov}.

 Several works introduce models of risk into reinforcement learning: exponential utility functions \citep{Borkar2001, Borkar2002,basu2008learning,fei2022cascaded} and  mean-variance models
\citep{Tamar2012, Prashanth2014}. A few later studies propose heuristic approaches involving specific coherent risk measures, such as
CVaR in the objective or constraints \citep{Chow2014,Chow2015a,ma2018risk}.
Generative model value iteration with coherent measures was analyzed by \cite{yu2018approximate}. Risk-aware Q-learning with Markov risk measures is considered by \cite{huang2017risk}.  All these methods apply to problems with a small number of state-action pairs allowing exhaustive experimentation.

Value function approximations in the context of  distributionally robust MDPs were considered  by \cite{Tamar2014}.
\cite{Tamar2017} study the policy gradient approach for Markov risk measures and uses it in an actor-critic type algorithm. Both approaches are heuristic. Policy evaluation with linear architecture and Markov risk measures by a method of temporal differences was analyzed by \cite{kose2021risk}, and asymptotic convergence was proved.  The recent work of \cite{yin2022near} uses variance to control the value iteration procedure.  \cite{fei2020risk,fei2021risk} focus exclusively on the entropic risk measure, but derive theoretically sound and near-optimal regret bounds.
Recently, \cite{lamrisk} proposed a version
of a risk-aware reinforcement learning method with coherent risk measures and function approximation. However, at each iteration, it uses extensive experimentation (double sampling)  to statistically estimate the risk with high accuracy and high probability at each state-control pair.

The objective of this paper is to present a risk-averse $Q$-learning method using linear function approximation, evaluate its regret, and demonstrate its efficacy on two examples.
In section \ref{s:Markov_risk} we review the theory of Markov risk measures and risk=averse dynamic programming. Special attention is paid to mini-batch risk measures, which allow for unbiased statistical estimation of transition risk. Section \ref{s:multipattern} introduces a new theoretical concept of a multipattern risk-averse MDP, which postulates the representation of transition risk as a combination of
basic risk measures (risk patterns). This modeling assumption allows for the use of a linear function approximation in representing $Q$-factors via features of the state--action pair. In section \ref{s:method}
we present the risk-averse $Q$-learning method with function approximation and discuss its convergence mechanism. We also propose an economical version, called ``lazy'' learning, which performs
the expensive value function calculation (minimization of $Q$-factors) at random episodes selected with low probability. The section provides an intuitive explanation of the convergence mechanism and states the main theorem about a high-probability regret bound of $\mathcal{O}\big(H^2 N^H \sqrt{ K}\big)$,  where $H$ is the horizon, $N$ is the mini-batch size, and  $K$ is the number of episodes. The proof of the theorem, involving several auxiliary lemmas, is provided in the Appendix. Section \ref{s:assignment} demonstrates the operation of the method on a stochastic assignment problem, where the analytical solution is known. In section \ref{s:bandit},
we present results obtained an a short horizon multi-armed bandit problem, in which substantial risk is involved.

\section{Markov risk measures and risk-averse dynamic programming}
\label{s:Markov_risk}

\subsection{Dynamic risk measurement}
Suppose the actions are generated by a deterministic policy $\pi$, so that $a_h = \pi_h(x_{1:h})$, where $x_{1:h}=(x_1,\dots,x_h)$ is the history of the process up to stage $h$.
A \emph{dynamic risk measure} evaluates the sequence of the random costs $Z_h=r_h(x_h,a_h)$, $h\in[1:H-1]$ and $Z_H=r_H(x_H)$ in a risk-aware fashion. We denote by $\Zb_h$ the space
of all real functions of the history $x_{1:h}$. We write $\Zb_{h:H}$ for $\Zb_h \times \dots \times \Zb_H$, and $Z_{h:H}$ for $(Z_h,\dots,Z_H)$.
 Because of the need to evaluate future costs at any period,  a dynamic risk measure is a collection of
 \emph{conditional risk measures} $\rho_{h,H}:\Zb_{h:H}\to\Zb_h$, $h\in [1:H]$, for which  we postulate three properties:

 \begin{tightitemize}
 \item[\textbf{Normalization:}] $\rho_{h,H}(0,\dots,0)=0$;
\item[\textbf{Monotonicity:}] If $Z_{h:H} \leq W_{h:H}$, then $\rho_{h,H}(Z_{h:H}) \le \rho_{h,H}(W_{h:H})$;
\item[\textbf{Translation:}] $\rho_{h,H}(Z_{h:H}) = Z_h + \rho_{h,H}(0,Z_{h+1:H})$.
\end{tightitemize}
Here and below, all inequalities are understood component-wise.

 Fundamental for such a nonlinear dynamic cost evaluation is \emph{time consistency},
 discussed in various forms by \cite{CDK:2006,ADEHK:2007,Ruszczynski2010Markov,cheridito2011composition}:
\emph{A dynamic risk measure is time consistent if  for every $h\in [1:{H-1}]$, if $Z_h=W_h$ and $\rho_{h+1,H}(Z_{h+1:H}) \le \rho_{h+1,H}(W_{h+1:H})$, then}
$\rho_{h,H}(Z_{h:H}) \le \rho_{h,H}(W_{h:H})$.
As proved by \cite{Ruszczynski2010Markov}, such measures, under the conditions specified above,
must have the recursive form:
\[
\rho_{h,H}(Z_{h:H}) = Z_h + \rho_h\Big(Z_{h+1}+ \rho_{h+1}\big(Z_{h+2}+ \dots + \rho_{H-1}(Z_H)\cdots\big)\Big),
\]
 where each $\rho_h(\Cdot)$ is a one-step conditional risk measure. This formula, generalizing the tower property of conditional expectations, is pertinent to our approach.

 \subsection{Markov risk measures}

We want to solve the problem of minimizing $\rho_{1,H}(Z_{1:H})$ over a policy $\pi$ in the
set $\Pi$ of all feasible deterministic history-dependent policies. We choose the minimization
setting to simplify manipulations with convex measures of risk. In order to obtain a dynamic programming formulation, we focus on
\emph{Markov risk measures}. They evaluate the risk-adjusted value of future costs $Z_{h:H}$ in a Markov system with a Markov control policy
$a_h = \pi_h(x_h)$, $h\in [1:H]$,
in such a way that the value of the future cost sequence is a function of the current state:
$\rho_{h,H}(Z_{h:H}) = V_h^\pi(x_h)$, where $V_h^\pi \in \Vb \eqdef \Rb^{|\Xb|}$.
 This, combined with the properties specified above, implies 
 that \emph{transition risk mappings}
 $\sigma_h:\Xb \times  \Pc(\Xb)\times \Vb\to \Rb$, $h\in[1:H-1]$, exist such that the value of each state
 for the Markovian policy $\pi$ can be evaluated by the following procedure:
 \begin{equation}
\label{DP-risk-finite}
\begin{aligned}
V_h^\pi(x) &=  r_h(x,\pi_h(x)) +  \sigma_h\big(x,\Pb_h(x,\pi_h(x)), V^\pi_{h+1}\big),
\quad  x\in \Xb, \quad h\in [1:H-1];\\
V_H^\pi(x) &= r_H(x), \quad  x \in \Xb;
\end{aligned}
\end{equation}
see \citep{Ruszczynski2010Markov,fan2022process} for detailed derivation.

In such an MDP, a Markovian optimal policy $\pi^\star$ exists, and the corresponding optimal
value functions $V_h^\star(\Cdot)$ satisfy the risk-averse dynamic programming equations:
 \begin{equation}
\label{DP-risk-finite-optimal}
\begin{aligned}
V_h^\star(x) &=  \min_{a\in \Ab} \Big\{r_h(x,a) +  \sigma_h\big(x,\Pb_h(x,a), V_{h+1}^\star\big)\Big\},\quad
  x\in \Xb, \quad h\in [1:H-1];\\
V_H^\star(x) &= r_H(x), \quad  x \in \Xb.
\end{aligned}
\end{equation}
The policy $\pi^\star$ is given by the minimizers in \eqref{DP-risk-finite-optimal}.

The $Q$-factors are defined for a risk-averse model in a way similar to the risk-neutral case:
\begin{align*}
Q_h^\pi(x,a)&=r_h(x,a)+\sigma_h\big(x,\Pb_h(x,a),V_{h+1}^\pi\big),\\
Q_h^\star(x,a)&=r_h(x,a)+\sigma_h\big(x,\Pb_h(x,a),V_{h+1}^\star\big),
\end{align*}
and therefore,
$V_h^\pi(x)=Q_h^\pi(x,\pi_h(x))$ and
$V_h^\star(x)=\min_{a\in\Ab}Q_h^\star(x,a)$, $h\in[1:H-1]$. The $Q$-factors are elements of the space $\Qb \eqdef \Rb^{|\Xb|\times |\Ab|}$.

If the stage-wise costs are random, denoted here by $R_h(x,a)$, the dynamic programming equations are modified accordingly. Denote by $\Pi_h(x,a)$ the joint distribution
of the cost $R(x,a)$ and the next state $x' \sim \Pb(x,a)$ in the product space $\Rb \times \Xb$. Then, after transformations detailed in \cite[\S 8.1.5]{dentcheva2024risk}, we obtain the equations:
 \begin{equation}
\label{DP-risk-finite-optimal-random}
\begin{aligned}
V_h^\star(x) &=  \min_{a\in \Ab} \Big\{\sigma_h\big(x,\Pi_h(x,a), R_h +  V_{h+1}^\star\big)\Big\},\quad
  x\in \Xb, \quad h\in [1:H-1];\\
V_H^\star(x) &= \sigma_H\big(x,\Pi_h(x),R_H), \quad  x \in \Xb.
\end{aligned}
\end{equation}
In a similar way, the $Q$-factors are:
\begin{equation}
\label{Q-factors-random}
\begin{aligned}
Q_h^\pi(x,a)&=\sigma_h\big(x,\Pi_h(x,a),R_h + V_{h+1}^\pi\big),\\
Q_h^\star(x,a)&=\sigma_h\big(x,\Pi_h(x,a),R_h + V_{h+1}^\star\big).
\end{aligned}
\end{equation}

In the rest of this section, for simplicity, we skip the subscript $h$ of the transition risk $\sigma_h(\Cdot,\Cdot,\Cdot)$ and kernel  $\Pb_h(\Cdot,\Cdot)$.

A simple example of a transition risk  mapping is the conditional expected value:
\begin{equation}
\label{E-form}
\sigma(x,\Pb(x),V) = \Pb(x) V = \sum_{y\in \Xb} \Pb(y|x) V(y).
\end{equation}
Its use reduces \eqref{DP-risk-finite}--\eqref{DP-risk-finite-optimal} to the standard expected-value dynamic programming.

In our analysis, we are interested in transition risk  mappings depending on the transition probabilities and the value function in a nonlinear way. An
example is the \emph{mean--semideviation model}\:
\begin{equation}
\label{msd-form}
\sigma(x,\Pb(x),V) = \Pb(x) V
+ \kappa \sum_{y\in \Xb} \Pb(y|x)\max\big( 0, V(y) -  \Pb(x) V\big), \quad \kappa\in [0,1].
\end{equation}
Another example is the \emph{Average Value at Risk} (AVaR):
\begin{equation}
\label{avar}
\sigma(x,\Pb(x),V) = \min_{\eta\in \Rb}\Big\{ \eta + \frac{1}{\alpha} \Pb(x) \big[ (V - \eta)_+\big]\Big\}, \quad \alpha\in (0,1].
\end{equation}
Yet another example, rarely used in the risk measure theory, due to its conservative nature, but relevant for us, is the \emph{worst-case} mapping:
\begin{equation}
\label{sup-form}
\sigma(x,\Pb(x),V) =  \inf \big\{ b\in \Rb: \Pb\big[V(y)\le b \,\big|\, x\big] = 1 \big\} .
\end{equation}
All examples above, as functionals of the value function $V(\Cdot)$, with a fixed $P=\Pb(x)$, satisfy the axioms of a coherent measure of risk \citep{ArtznerDelbaenEberEtAl1999}:
\begin{tightitemize}
\item[\textbf{Convexity:}] $\sigma(x,P,\lambda V + (1-\lambda) W) \leq  \lambda \sigma(x,P,V) + (1-\lambda) \sigma(x,P,W)$,  $\forall\, \lambda \in [0,1]$;
\item[\textbf{Monotonicity:}] If $V \leq W$ then $\sigma(x,P,V) \leq \sigma(x,P,W)$;
\item[\textbf{Translation:}] $\sigma(x,P,V + \beta\1) = \sigma(x,P,V) + \beta$, for all $\beta \in \Rb$ (here, $\1$ is the vector of 1's);
\item[\textbf{Positive homogeneity:}] $\sigma(x,P,\alpha V) = \alpha \sigma(x,P,V)$, for all \mbox{$\alpha \geq 0$}.
\end{tightitemize}
For such transition risk mappings, the dual representation is valid:
\begin{equation}
\label{dual}
\sigma(x,\Pb(x),V) = \max \Big\{  \Qb V: {\frac{\D \Qb}{\D\Pb(x)} \in \Ac(x)}\Big\},
\end{equation}
with some convex and closed set $\Ac(x)$ of densities; see \cite{RuszczynskiShapiro2006a} and the comprehensive exposition in \cite[Ch. 2]{dentcheva2024risk}.

It should be stressed that convex combinations of coherent measures of risk are coherent. Specifically, we may mix the Average Values at Risk with different levels $\alpha$ with the expected value and construct so-called \emph{spectral} measures of risk. We may also include the worst case mapping in the mix.

\subsection{Statistical estimation of transition risk}

A serious difficulty associated with the risk-averse control model described above is the problem of the statistical estimation of the transition risk $\sigma(x,\Pb(x),V)$, due to its nonlinear dependence on $\Pb(x)$. The literature provides results for special cases of measures of risk, all requiring repeated simulation from the conditional distribution $\Pb(x)$; see \citep{dentcheva2024risk,shapiro2021lectures}.

In the context of optimization and control of risk, the Average Value at Risk \eqref{avar} is an exception, due to its minimum-value definition.
To be more specific, consider a practically useful mixture of AVaR and the expected value:
\begin{equation}
\label{mean-avar}
\sigma(x,\Pb(x),V) = (1-\varkappa) \,\Pb(x) V + \varkappa \,\min_{\eta\in \Rb}\Big\{ \eta + \frac{1}{\alpha} \Pb(x) \big[ (V - \eta)_+\big]\Big\},
\end{equation}
with fixed $\alpha\in (0,1)$ and $\varkappa\in [0,1]$. Equation \eqref{DP-risk-finite-optimal} takes on the form:
\begin{equation}
\label{avar-DP}
\begin{aligned}
V_h^\star(x) &=  \min_{a\in \Ab} \Big\{r_h(x,a) +  (1-\varkappa) \,\Pb_h(x,a) V_{h+1}^\star +
\varkappa \,\min_{\eta\in \Rb}\Big\{ \eta + \frac{1}{\alpha} \Pb_h(x,a) \big[ (V_{h+1}^\star - \eta)_+\big]\Big\}\Big\}\\
&=  \min_{{a\in \Ab}\atop {\eta\in \Rb}} \Big\{r_h(x,a) + \varkappa \eta  +  \Pb_h(x,a)\big[  (1-\varkappa) V_{h+1}^\star +
 \frac{\varkappa}{\alpha} (V_{h+1}^\star - \eta)_+\big]\Big\}.
\end{aligned}
\end{equation}
This equation is very similar to the expected value dynamic programming, with two differences:
\begin{tightitemize}
\item
We consider \emph{augmented controls} $\bar{a}_h = \{a_h,\eta_h\} \in \Ab \times \Rb$;
\item The next state's value $V_{h+1}^\star$ is replaced by $(1-\varkappa) V_{h+1}^\star +
 \frac{\varkappa}{\alpha} (V_{h+1}^\star - \eta)_+$.
 \end{tightitemize}
 Consequently, the $Q$-factor involves the augmented controls:
 \begin{equation}
\label{avar-Q}
Q_h(x,\bar{a}) =  r_h(x,a) + \varkappa \eta  +  \Pb(x,a)\big[  (1-\varkappa) V_{h+1}^\star +
 \frac{\varkappa}{\alpha} (V_{h+1}^\star - \eta)_+\big],
\end{equation}
with
\[
V_{h+1}^\star(x') = \min_{{a\in \Ab}\atop {\eta\in \Rb}}Q_{h+1}(x',(a,\eta)).
\]
This allows us to observe a transition from a state $x$ with control $a$ to a random state $Y$ and use the random quantity
\[
\widetilde{Q}_h(x,\bar{a}) =  r_h(x,a) + \varkappa \eta  +    (1-\varkappa) V_{h+1}^\star(Y) +
 \frac{\varkappa}{\alpha} (V_{h+1}^\star(Y) - \eta)_+
\]
as an unbiased observation of $Q_h(x,\bar{a})$. The artificial control variable $\eta$ does not affect the transition, but it does
affect the calculation on the right hand side.

A similar construction can be applied to spectral risk measures involving mixtures of the expected value with a weight $\varkappa_0$ and several AVaRs, with risk levels $\alpha_j\in (0,1)$ and
weights $\varkappa_j$, $j=1,\dots,m$. We must have $\sum_{j=0}^m \varkappa_j = 1$ to preserve coherence. Then
the augmented controls are $\bar{a} = \{a,\eta_{1:m}\}$, and
\begin{equation}
\label{spectral-Q}
Q_h(x,\bar{a}) =  r_h(x,a) + \sum_{j=1}^m \varkappa_j \eta_j  +  \Pb_h(x,a)\Big[  \varkappa_0 V_{h+1}^\star +
 \sum_{j=1}^m \frac{\varkappa_j}{\alpha_j} (V_{h+1}^\star - \eta_j)_+\Big].
\end{equation}
Again, with a random state $Y$ following $x$ with action $a$,
\[
\widetilde{Q}_h(x,\bar{a}) =  r_h(x,a) + \sum_{j=1}^m \varkappa_j \eta_j  +    \varkappa_0 V_{h+1}^\star(Y) +
 \sum_{j=1}^m \frac{\varkappa_j}{\alpha_j} (V_{h+1}^\star(Y) - \eta_j)_+
 \]
 becomes an unbiased observation of $Q_h(x,\bar{a}) $.

Unbiased statistical estimation of risk is also possible for  a special class of transition risk mappings, which we call
\emph{mini-batch mappings}. Below, we adapt to our case the construction from \citep{dentcheva2023mini}. For
a sample $Y^{1:N}=(Y^1,\dots,Y^N)$ with $N$ independent observations of the next state distributed according to $\Pb(x)$ in $\Xb$, we obtain a random empirical measure
$\Pb^{(N)}(x)=\frac{1}{N}\sum_{j=1}^N\delta_{Y^j}$ (each $\delta_{Y^j}$ is the Dirac measure with unit mass at $Y^j$). Then, for a ``base'' coherent
transition risk mapping $\sigma^{\text{b}}(x,\Pb(x),V)$, we consider the random mapping $\sigma^{\text{b}}(x,\Pb^{(N)}(x),V)$. Its expected value is a new coherent transition risk mapping
\begin{equation}
\label{mini-batch}
\sigma^{(N)}(x,\Pb(x),V)=\Eb_{Y^{1:N}\sim (\Pb(x))^N}\big[\sigma^{\text{b}}(x,\Pb^{(N)}(x),V)\big].
\end{equation}
An essential virtue of this construction is that $\sigma^{\text{b}}(x,\Pb^{(N)}(x),V)$ is a function of $N$ values: $V(Y^j)$, $j=1,\dots,N$, that can be calculated for a batch $Y^{1:N}$.

The verification that $\sigma^{(N)}(x,\Pb(x), \Cdot)$ is coherent is straightforward, directly from the four axioms. Evidently, if $\sigma^{\text{b}}(x,\Pb(x),\Cdot)$
is the conditional expectation \eqref{E-form}, the mini-batch mapping \eqref{mini-batch} is the same, due to the tower property. However, in the examples
\eqref{msd-form} and \eqref{sup-form}, the formula \eqref{mini-batch} defines  new transition risk mappings. Thanks to the small batch involved and the use of the
expectation, their statistical estimation and learning become tractable, as we shall demonstrate below.
Particularly convenient is the mini-batch counterpart of the worst-case mapping \eqref{sup-form}:
\begin{equation}
\label{batch-sup}
\sigma^{(N)}(x,\Pb(x),V) = \Eb_{Y^{1:N}\sim (\Pb(x))^N}\big[ \textstyle{\max_{1 \le j \le N}} V(Y^j) \big].
\end{equation}
Even for $N=2$ it defines a nontrivial yet tractable model of risk aversion. We stress again that we do not consider $N\to \infty$, but rather work with a fixed (and possibly very small) $N$.

Combining the measure \eqref{batch-sup} with the conditional expectation, we obtain the $Q$-factors,
\[
Q_h(x,a) = r_h(x,a) + \Eb_{Y^{1:N}\sim (\Pb_h(x,a))^N} \Big[ (1-\varkappa) \frac{1}{N} \sum_{j=1}^N V_{h+1}^\star(Y^j)
+ \varkappa {\max_{1 \le j \le N}} V_{h+1}^\star(Y^j)\Big],
\]
and their unbiased statistical estimates:
\begin{equation}
\label{Psi-mix}
\widetilde{Q}_h(x,a) = r_h(x,a) +  (1-\varkappa) \frac{1}{N} \sum_{j=1}^N V_{h+1}^\star(Y^j)
+ \varkappa {\max_{1 \le j \le N}} V_{h+1}^\star(Y^j).
\end{equation}
If the stage-wise costs are random, the construction is similar; see \eqref{DP-risk-finite-optimal-random}--\eqref{Q-factors-random}.
 We observe a sample $(R,Y)^{1:N}$ of the pairs cost--state resulting from state $x$. These allow
us to construct an empirical measure $\Pi^{(N)}(x)=\frac{1}{N}\sum_{j=1}^N\delta_{(R,Y)^j}$, where each $\delta_{(R,Y)^j}$ is the Dirac measure with unit mass at $(R,Y)^j$. Then, for a base coherent
transition risk mapping $\sigma^{\text{b}}(x,\Pi(x),R+V)$, we consider the random mapping $\sigma^{\text{b}}(x,\Pi^{(N)}(x),R+V)$. Its expected value is a new coherent transition risk mapping
\begin{equation}
\label{mini-batch-random}
\sigma^{(N)}(x,\Pi(x),R+V)=\Eb_{(R,Y)^{1:N}\sim (\Pi(x))^N}\big[\sigma^{\text{b}}(x,\Pi^{(N)}(x),R+V)\big].
\end{equation}

\section{Multipattern risk-averse Markov decision processes}
\label{s:multipattern}

The dynamic programming equations \eqref{DP-risk-finite-optimal} can be solved by a tabular approach only for small-size state spaces. To treat practically relevant
problems, we adapt to our case the feature-based approximation. Let $\Sb_d$ be the unit simplex in $\Rb^d$.
We assume that a \emph{feature map} $\varphi:\Xb\times\Ab\rightarrow \Sb_d$ is available such that
the stage-wise cost and the transition risk can be deduced from the features of the state-control pair. This requires a modeling assumption
that will allow us to learn from the observations at other state-control pairs.

\begin{definition}
\label{d:multipattern}
{\rm We say that the model is a \emph{Multipattern Risk-Averse MDP}, if for any $h\in [1:H]$, there exist $d$ unknown
coherent measures of risk $\rho_{hj}:\Vb \to \Rb$, $j=1,\dots,d$, 
 and an unknown vector $\theta_h \in \mathbb{R}^d$,  such that for any $(x,a)\in\Xb\times\Ab$
 and all $V_{h+1}\in \Vb$, with $\rho_h = \big\{\rho_{hj}\big\}_{j=1}^d$, we have
\begin{align}
\sigma_h(x, \Pb_h(x,a), V_{h+1}) &= \sum_{j=1}^d\varphi_j(x,a)\, \rho_{hj}\big(V_{h+1}\big) = \varphi(x,a)^{\top}\rho_{h}\big(V_{h+1}\big), \label{multipattern1}\\
r_h(x,a) &= \varphi(x,a)^{\top} \theta_h.\label{multipattern2}
\end{align}
We also assume that $\|\theta_h\|\le \sqrt{d}$ and $\|\rho_h(\1)\|\ \le \sqrt{d}$. By a coherent measure of risk, we understand a function on $\Vb$ that satisfies the four axioms of convexity, monotonicity, translation, and positive homogeneity.}
\end{definition}

We call the measures $\rho_{hj}(\Cdot)$, $j=1,\dots,d$, the \emph{risk evaluation patterns}. As they are coherent measures of risk, \eqref{multipattern1} is consistent with the coherence
of $\sigma_h(x,\Pb_h(x,a),\Cdot)$ (recall that $\varphi(x,a)\in \Sb_d$). We may also work with different feature maps $\varphi_h:\Xb\times\Ab\rightarrow \Sb_{d_h}$ for each stage $h \in [1:H]$, adapting Definition \ref{d:multipattern} accordingly. All our considerations
remain valid in this case, only the notation complicates.

In the special case of linear patterns,
with $\rho_h(V_{h+1}) = \int_\Xb V_{h+1}(x')\,\mu_h(\D x')$, where $\mu_h$ are (unknown) $d$-dimensional measures, the multipattern MDP reduces to a linear MDP studied in the literature.

In the tabular approach, we consider all possible state-action pairs $(x,a)_j$, $j=1,\dots,d$. Then
\eqref{multipattern1} is satisfied with  $\varphi_j(x,a) = 1$ if $(x,a)=(x_j,a_j)$, and 0 otherwise, $j=1,\dots,d$; and $\rho_{hj}(\Cdot)\equiv \sigma_h(x_j,\Pb_h(x_j,a_j),\Cdot)$ for all $j=1,\dots,d$. In this case, we have as many patterns as many state-control pairs exist
and the model is exact.

Definition \ref{d:multipattern} implies that for each $h$ and any policy $\pi$ we have
\begin{equation}
\label{Q-w-exact}
Q_h^\pi(x,a)   =  \varphi(x,a)^\top w_h^\pi,
\end{equation}
with some weight vector $w_h^\pi$. Our objective is to learn the weights corresponding to the optimal policy.

\section{The method}
\label{s:method}

We present the method for the case of a mini-batch transition risk mapping. To de-clutter notation, we ignore the direct dependence of the transition risk mappings on the state,
and write $\sigma_h(\Pb_h(x,a), V_{h+1})$ instead of  $\sigma_h(x,\Pb_h(x,a), V_{h+1})$, but all our considerations are valid for the general structure as well.

\subsection{The risk-averse {\boldmath ${Q}$}-learning algorithm}
We denote by $Y_{h+1}^k$ the $N$-tuple of conditionally independent observations $\{Y_{h+1}^{k,1},\dots,Y_{h+1}^{k,N} \}$ of a state that may result from
applying the action $a_h^k$ at the state $x_h^k$ in episode $k$ at stage $h$. For any function $V_{h+1}(\Cdot)\in \Vb$, we
define the \emph{risk aggregator} $\varPsi_h\big(\big\{ V_{h+1}( Y_{h+1}^{\tau,j})\big\}_{j\in [1:N]}\big) = \sigma^{\text{b}}_h\big(\Pb_h^{(N)}(x_h^\tau),V_{h+1}\big)$, where $\Pb_h^{(N)}(x_h^\tau)$
is the empirical measure associated with the mini-batch $Y_{h+1}^\tau$.
By Eq. \eqref{mini-batch},
\begin{equation}
\label{mini-batch-estimate}
\sigma^{(N)}_h(\Pb_h(x_h^\tau,a_h^\tau), V_{h+1})=\Eb\Big[\varPsi_h\big(\big\{ V_{h+1}( Y_{h+1}^{\tau,j})\big\}_{j\in [1:N]}\big)\,\Big|\,\Fc_h^\tau\Big].
\end{equation}
where $\Fc_h^\tau$ is the sigma-algebra generated by $\{x^1,a^1,\dots,x^{\tau-1},a^{\tau-1},\dots, x_h^\tau,a_h^\tau\}$.

Algorithm \ref{a:basic}  presents an adaptation of the $Q$-learning method from \cite{jin2020provably} to the risk-averse case discussed here.

\begin{algorithm}
\SetAlgoLined
\For{episode $k\leftarrow 1$ \KwTo $K$}{
\emph{Receive the initial state $x_1^k$}\;
\For{stage $h\leftarrow H$ \KwTo $1$}{
$\qquad\quad\ \Lambda_h^k\leftarrow\textstyle{\sum_{\tau=1}^{k-1}}\varphi(x_h^\tau,a_h^\tau)\,\varphi(x_h^\tau,a_h^\tau)^{\top}+\lambda\cdot\mathbb{I};\ (\lambda>0)$;\par
$\qquad\quad\  w_h\leftarrow(\Lambda_h^k)^{-1}\textstyle{\sum_{\tau=1}^{k-1}}\varphi(x_h^\tau,a_h^\tau)\,\Big[r_h(x_h^\tau,a_h^\tau)
+\varPsi_h\Big(\big\{ \min_a Q_{h+1}^k( Y_{h+1}^{\tau,j},a)\big\}_{j\in [1:N]}\Big)\Big];$\par
$\qquad\quad\  Q_h^k(\Cdot,\Cdot)\leftarrow\max\big\{w^{\top}_h\varphi(\Cdot,\Cdot)-\beta\,[\varphi(\Cdot,\Cdot)^{\top}(\Lambda_h^k)^{-1}\varphi(\Cdot,\Cdot)]^{1/2},0\big\}$.
}
\For{stage $h\leftarrow 1$ \KwTo $H-1$}{
$\qquad\quad\ $ $a_h^k\leftarrow \argmin_{a\in\mathbb{A}}Q_h^k(x_h^k,a)$;\par
$\qquad\quad\ $ Observe an $N$-sized sample $Y_{h+1}^k$ from $\Pb_h(x_h^k,a_h^k)$;\par
$\qquad\quad\ $            Define $x_{h+1}^k$ as a randomly chosen state in $Y_{h+1}^k$.\par
}
}
\caption{Feature-Based Risk-Averse $Q$-Learning}
\label{a:basic}
\end{algorithm}
\vspace{1ex}

If the stage-wise costs are random, the construction is similar and follows \eqref{mini-batch-random}. We denote by $(R_{h},Y_{h+1})^k$ the $N$-tuple of conditionally independent observations
$(R_h,Y_{h+1})^{k,j}$, $j=1,\dots,N$, of the cost and the state that may result from
applying the action $a_h^k$ at the state $x_h^k$ in episode $k$ at stage $h$. For any function $V_{h+1}(\Cdot)\in \Vb$, we
define the risk aggregator $\varPsi_h\big(\big\{ R_h^{\tau,j} + V_{h+1}( Y_{h+1}^{\tau,j})\big\}_{j\in [1:N]}\big) = \sigma^{\text{b}}_h\big(\Pi_h^{(N)}(x_h^\tau),R_h + V_{h+1}\big)$, where $\Pi_h^{(N)}(x_h^\tau)$
is the empirical measure associated with the mini-batch $(R_h,Y_{h+1})^\tau$. In Algorithm \ref{a:basic}, the regression step is modified accordingly:
\[
 w_h\leftarrow(\Lambda_h^k)^{-1}{\sum_{\tau=1}^{k-1}}\varphi(x_h^\tau,a_h^\tau)\,
\varPsi_h\Big(\big\{R_h^{\tau,j} + \min_a Q_{h+1}^k( Y_{h+1}^{\tau,j},a)\big\}_{j\in [1:N]}\Big).
\]

In the case of the mean--AVaR mapping \eqref{mean-avar} and deterministic costs, the method is almost the same. We consider augmented controls $\bar{a}_h^\tau = \{a_h^\tau,\eta_h^\tau\}$
instead of $a_h^\tau$.
We use $N=1$ (one simulated transition from $x_h^\tau$ with control $a_h^\tau$), and
$\varPsi_h\big(V_{h+1}(x_{h+1}^\tau)\big) = \varkappa \eta_h^\tau  +    (1-\varkappa) V_{h+1}(x_{h+1}^\tau) +
 \frac{\varkappa}{\alpha} (V_{h+1}(x_{h+1}^\tau) - \eta_h^\tau)_+$. It depends on $\eta_h^\tau$, which we ignore in the notation for simplicity.
The added control $\eta_h^\tau$ is the second component of the minimizer of $Q_h^k(x_h^\tau,(a,\eta))$ with respect to $(a,\eta)$.

\subsection{Economical policy evaluation}

The main computational cost in Algorithm \ref{a:basic} is the  policy evaluation step in the backward pass, which involves
the calculation of the new policy values: $V_{h+1}^k\big(Y_{h+1}^{\tau,j}\big) = \min_a Q_{h+1}^k( Y_{h+1}^{\tau,j},a)$ for each episode~$k$, every past episode $\tau$, and all batch states $j$.
They are used in the calculation of the conditional risk estimate $\varPsi_h\big(\big\{ V_{h+1}^k\big(Y_{h+1}^{\tau,j}\big)\big\}_{j\in [1:N]}\big)$. Altogether,
$\Oc(K^2HN)$ optimization
problems must be solved, each involving $|\Ab|$ possible action values. However, our analysis outlined in the next subsection and detailed in Appendix~\ref{s:regret} shows that
we only need an estimator $\widetilde{\Psi}_{h+1}^{\tau}(Q_{h+1})$ satisfying for every $Q$-function $Q_{h+1}\in \Qb$ and the corresponding value function $V_{h+1}(\Cdot) = \min_a Q_{h+1}(\Cdot,a)$ the relation
\begin{equation}
\label{mini-batch-estimate-2}
\sigma^{(N)}_h(\Pb_h(x_h^\tau,a_h^\tau), V_{h+1})=\Eb\big[\widetilde{\varPsi}_{h+1}^{\tau}(Q_{h+1})\,\big|\,\Fc_h^\tau\big].
\end{equation}
We do not need the optimal actions in the problems $\min_a Q_{h+1}^k(Y_{h+1}^{\tau,j},a)$, $j=1,\dots,N$.

A practical approach to construct such estimators is the following \emph{economical (``lazy'') policy evaluation}. Together with each batch $Y_{h+1}^\tau$, we store the actions $\tilde{a}_{h+1}^{\tau,j}$ that were best for
$Y_{h+1}^{\tau,j}$, $j=1,\dots,N$, at the latest episode $\ell$ preceding $k$, at which the exact calculation of $\min_a Q_{h+1}^\ell\big(Y_{h+1}^{\tau,j},a\big)$, $j=1,\dots,N$, was performed.

At episode $k$ and stage $h$ of the backward pass, for every $\tau\in [1:k-1]$, with probability $p^{\text{renew}}$ we follow Algorithm \ref{a:basic}, solve all $N$ optimization problems, calculating the quantities
\begin{equation}
\label{Case-1}
 \widetilde{V}_{h+1}^{\tau,j} = {V}_{h+1}^k(Y_{h+1}^{\tau,j}\big) = \min_a  Q_{h+1}^k\big(Y_{h+1}^{\tau,j},a\big),\quad j=1,\dots,N,
\end{equation}
and update all $\tilde{a}_{h+1}^{\tau,j}$, $j=1,\dots,N$, to the new solutions.

With probability $1- p^{\text{renew}}$, we keep $\tilde{a}_{h+1}^{\tau,j}$, $j=1,\dots,N$, unchanged and set
\begin{equation}
\label{Case-2}
\widetilde{V}_{h+1}^{\tau,j} =   Q_{h+1}^k\big(Y_{h+1}^{\tau,j},\tilde{a}_{h+1}^{\tau,j}\big) - \Delta_{h+1}^{k,\tau},\quad j=1,\dots,N.
\end{equation}
The bias correction $\Delta_{h+1}^{k,\tau}$ can be estimated as an average of improvements at earlier batches $\nu < \tau$:
\[
\Delta_{h+1}^{k,\tau}  = \frac{1}{\big|\Nc_{h+1}^{k,\tau}\big|}
\sum_{\nu \in  \Nc_{h+1}^{k,\tau}} \sum_{j=1}^{N}\big[ Q_{h+1}^k\big(Y_{h+1}^{\nu,j},\tilde{a}_{h+1}^{\nu,j}\big)- \min_a  Q_{h+1}^k\big(Y_{h+1}^{\nu,j},a\big)\big],
\]
with
\[
 \Nc_{h+1}^{k,\tau} = \big\{\nu<\tau: \text{ exact minimization of $ Q_{h+1}^k\big(Y_{h+1}^{\nu,j},a\big)$ was performed for } j=1,\dots,N \big\}.
\]
If this set is empty, we use $\Delta_{h+1}^{k,\tau}=0$. Finally, we calculate the estimate
\[
\widetilde{\varPsi}_{h+1}^{\tau}(Q_{h+1}^k) = \varPsi_h\Big(\big\{ \widetilde{V}_{h+1}^{\tau,j}\big\}_{j\in [1:N]}\Big).
\]
As the bias correction term is the same for all batch observations,
due to the translation property of $\varPsi_h(\Cdot)$,  we have
\[
\widetilde{\varPsi}_{h+1}^{\tau}(Q_{h+1}^k) = \varPsi_h\Big( \Big\{ Q_{h+1}^k\big(Y_{h+1}^{\tau,j},\tilde{a}_{h+1}^{\tau,j}\big)\Big\}_{j=1,\dots,N}\Big)
- \Delta_{h+1}^{k,\tau}.
\]
Our experience indicates that with the progress of the method, the most recently updated actions $\tilde{a}_{h+1}^{\tau,j}$, $j=1,\dots,N$, are optimal or close to optimal, and the bias disappears.

This modification is relevant in the expected-value case as well; it substantially reduces the computational effort, if $p^{\text{renew}}$ is small.

\subsection{The convergence mechanism}

Before proceeding to the detailed convergence analysis, we provide an intuitive explanation of our ideas.
We focus on the deterministic cost case which is more transparent. The random cost case can be treated in the same way, but with more complicated notation.

The vector $w_h$ calculated in the algorithm is the solution to the following ridge regression problem:
\begin{equation}
\notag
w_h\leftarrow\argmin_{w}\sum_{\tau=1}^{k-1}\Big[r_h(x_h^\tau,a_h^\tau)
+\varPsi_h\big(\big\{ V_{h+1}^k( Y_{h+1}^{\tau,j})\big\}_{j\in [1:N]}\big)
-w^{\top}\varphi(x_h^\tau,a_h^\tau)\Big]^2+\lambda\|w\|^2.
\end{equation}
If we ignore the $\beta$-term in the approximation of $Q_h^k(\Cdot,\Cdot)$, we have
\begin{equation}
\label{Q-approx}
\begin{aligned}
    Q_h^k(x,a) &\approx \varphi(x,a)^Tw_h\\
    &= \varphi(x,a)^{\top}(\Lambda_h^k)^{-1}\sum_{\tau=1}^{k-1}\varphi(x_h^\tau,a_h^\tau)\big[r_h(x_h^\tau,a_h^\tau)+
    \varPsi_h\big(\big\{ V_{h+1}^k( Y_{h+1}^{\tau,j})\big\}_{j\in [1:N]}\big)
\big],
\end{aligned}
\end{equation}
where $V_{h+1}^k\big(Y_{h+1}^{\tau,j}\big)=\min_a Q_{h+1}^k\big(Y_{h+1}^{\tau,j},a\big)$.
Our idea is that the part of \eqref{Q-approx} involving the terms
\begin{equation}
\notag
\hat{\sigma}_h^k(\Pb_h(x,a), V_{h+1}^k)=\varphi(x,a)^{\top}(\Lambda_h^k)^{-1}\sum_{\tau=1}^{k-1}\varphi(x_h^\tau,a_h^\tau)\,\varPsi_h\big(\big\{ V_{h+1}^k( Y_{h+1}^{\tau,j})\big\}_{j\in [1:N]}\big),
\end{equation}
is close to the mini-batch  transition risk mapping $\sigma^{(N)}_h(\Pb_h(x,a), V_{h+1}^k)$, as a function of $(x,a)$.
At first, we establish the approximation to
\begin{equation}
\notag
\bar{\sigma}_h^k(\Pb_h(x,a), V_{h+1}^k)=\varphi(x,a)^{\top}(\Lambda_h^k)^{-1}\sum_{\tau=1}^{k-1}\varphi(x_h^\tau,a_h^\tau)\,\sigma^{(N)}_h( \Pb_h(x_h^\tau,a_h^\tau), V_{h+1}^k).
\end{equation}
We have
\begin{multline*}
\hat{\sigma}_h^k(\Pb_h(x,a), V_{h+1}^k)-\bar{\sigma}_h^k(\Pb_h(x,a), V_{h+1}^k)\\
=\varphi(x,a)^{\top}(\Lambda_h^k)^{-1}\sum_{\tau=1}^{k-1}\varphi(x_h^\tau,a_h^\tau)
\Big[ \varPsi_h\big(\big\{ V_{h+1}^k( Y_{h+1}^{\tau,j})\big\}_{j\in [1:N]}\big)-\sigma^{(N)}_h( \Pb_h(x_h^\tau,a_h^\tau), V_{h+1}^k)\Big] .
\end{multline*}
Due to \eqref{mini-batch-estimate},  we can bound the difference above by a concentration inequality for self-scaled martingales.
This analysis remains valid for all estimators $\widetilde{\varPsi}_{h+1}^{\tau}$ satisfying \eqref{mini-batch-estimate-2}.

 In the next step, we use the multipattern risk-averse MDP structure to show that for all $(x,a)$ we have the approximation
 $\bar{\sigma}_h^k(\Pb_h(x,a), V_{h+1}^k)\approx\sigma^{(N)}_h(\Pb_h(x,a), V_{h+1}^k)$. Using the fact that $\lambda$ is small, we obtain
\begin{equation}
\notag
\begin{aligned}
        \bar{\sigma}_h^k(\Pb_h(x,a), V_{h+1}^k)
        &=\varphi(x,a)^{\top}(\Lambda_h^k)^{-1}\sum_{\tau=1}^{k-1}\varphi(x_h^\tau,a_h^\tau)\,\varphi(x_h^\tau,a_h^\tau)^{\top}\rho_{h}\big(V_{h+1}^k\big)\\
        &\approx \varphi(x,a)^{\top}\rho_{h}\big(V_{h+1}^k\big) =\sigma^{(N)}_h(\Pb_h(x,a), V_{h+1}^k).
\end{aligned}
\end{equation}
In a similar way,
\begin{multline*}
\varphi(x,a)^{\top}(\Lambda_h^k)^{-1}\sum_{\tau=1}^{k-1}\varphi(x_h^\tau,a_h^\tau)\,r_h(x_h^\tau,a_h^\tau) \\
=
\varphi(x,a)^{\top}(\Lambda_h^k)^{-1}\sum_{\tau=1}^{k-1}\varphi(x_h^\tau,a_h^\tau)\,\varphi(x_h^\tau,a_h^\tau)^{\top}\theta_h
\approx \varphi(x,a)^{\top}\theta_h = r(x,a).
\end{multline*}
The substitution of all these approximations into \eqref{Q-approx} yields the desired relation:
\[
 Q_h^k(x,a)\approx r(x,a) + \sigma^{(N)}_h(\Pb_h(x,a), V_{h+1}^k).
\]
Using these ideas we establish an upper bound on the regret:
\[
\text{Regret}(K)=\sum_{k=1}^K\big[V_1^{\pi_k}(x_1^k)-V_1^\star(x_1^k)\big],
\]
where  $\pi_k$ is the policy determined by Algorithm \ref{a:basic} at iteration $k$.

\setcounter{theorem}{0}

\begin{theorem}
\label{t:main}
There exists an absolute constant $c_\beta>0$ such that, for any fixed $p\in (0,1)$, if we set $\beta=c_\beta d H\sqrt{\varGamma}$ in Algorithm \ref{a:basic}, with $\varGamma=\ln(2dKH/p)$, then with probability $1-p$  the total regret is at most $
c_\beta H^2 N^H d \sqrt{K}  \sqrt{2d\varGamma \ln (1+ K/\lambda)}$.
\end{theorem}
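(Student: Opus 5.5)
We follow the LSVI-UCB template of \cite{jin2020provably}. The multipattern structure of Definition~\ref{d:multipattern} replaces linearity, and two risk-specific ingredients are needed: the mini-batch unbiasedness \eqref{mini-batch-estimate}, and a way to compare risk evaluations along the sampled path $x_{h+1}^k$ instead of through a conditional expectation.

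\textbf{Step 1 (uniform concentration).} For fixed $h,k$ and every $V$ in the class $\mathcal{V}$ of functions
\[
\min_a \max\big\{w^\top\varphi(\Cdot,a)-\beta\|\varphi(\Cdot,a)\|_{\Lambda^{-1}},0\big\},
\]
with bounded parameters, we bound
\[
\Big\|\sum_{\tau<k}\varphi_h^\tau\big[\varPsi_h(\{V(Y_{h+1}^{\tau,j})\}_j)-\sigma_h^{(N)}(\Pb_h(x_h^\tau,a_h^\tau),V)\big]\Big\|_{(\Lambda_h^k)^{-1}}.
\]
The summands form a martingale difference sequence by \eqref{mini-batch-estimate}. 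They are bounded by $H$, since the values lie in $[0,H]$ and $\varPsi_h$ is monotone and translation equivariant. We apply the self-normalized inequality for a fixed $V$, then take a union bound over an $\varepsilon$-net of $\mathcal{V}$. Here $\varPsi_h$ is $1$-Lipschitz in the sup norm (a consequence of monotonicity and translation), so the covering number of $\mathcal{V}$ transfers directly. The result is a bound $c\,dH\sqrt{\varGamma}$ with probability $1-p/2$. This is where $\beta$ is calibrated.

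\textbf{Step 2 (multipattern bias).} Using \eqref{multipattern1}--\eqref{multipattern2}, we write
\[
r_h+\sigma_h^{(N)}(\Pb_h(x,a),V)=\varphi(x,a)^\top\big(\theta_h+\rho_h(V)\big).
\]
The ridge bias is then $\lambda\,\varphi^\top(\Lambda_h^k)^{-1}(\theta_h+\rho_h(V))$. Its size is controlled by $\|\rho_h(V)\|\le H\|\rho_h(\1)\|\le H\sqrt d$, which follows from monotonicity and positive homogeneity. Combining this with Step~1 gives
\[
\big|\varphi^\top w_h^k - r_h-\sigma_h^{(N)}(\Pb_h,V_{h+1}^k)\big|\le \beta\|\varphi\|_{(\Lambda_h^k)^{-1}}.
\]

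\textbf{Step 3 (optimism).} By backward induction, using monotonicity of $\sigma_h^{(N)}$ in $V$, we show $Q_h^k\le Q_h^\star$. This is the minimization counterpart of optimism: the bonus is subtracted and the result is clipped at $0$. Consequently $V_1^k\le V_1^\star$, so the regret is at most $\sum_k\big(V_1^{\pi_k}-V_1^k\big)(x_1^k)$.

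\textbf{Step 4 (recursion, the main obstacle).} Set $\delta_h^k=(V_h^{\pi_k}-V_h^k)(x_h^k)$. Steps~2--3 give
\[
\delta_h^k\le \sigma_h^{(N)}(\Pb_h,V_{h+1}^{\pi_k})-\sigma_h^{(N)}(\Pb_h,V_{h+1}^k)+2\beta\|\varphi_h^k\|_{(\Lambda_h^k)^{-1}}.
\]
The difficulty is that $\sigma$ is nonlinear, so the difference is not an expectation of the next-step difference at the visited state. We resolve this with the dual representation \eqref{dual} applied to the base mapping. The difference is bounded by $\Eb[\max_j (V^{\pi_k}_{h+1}-V^k_{h+1})(Y^{j})]$, which is at most $\Eb[\sum_j(\cdot)(Y^j)]$ because the differences are nonnegative by optimism. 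The next state $x_{h+1}^k$ is a uniformly random element of the batch, so this sum equals $N\,\Eb[\delta_{h+1}^k\mid\Fc_h^k]$. Hence
\[
\delta_h^k\le N\big(\delta_{h+1}^k+\zeta_{h+1}^k\big)+2\beta\|\varphi_h^k\|_{(\Lambda_h^k)^{-1}},
\]
where $\zeta_{h+1}^k$ is a martingale difference bounded by $2H$. Unrolling over $H$ stages gives the factor $N^H$.

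\textbf{Step 5 (summation).} We sum over $k$ and $h$. By Azuma's inequality, the martingale terms contribute $O(N^H H\sqrt{KH\varGamma})$ with probability $1-p/2$. The elliptical potential lemma gives
\[
\sum_k\|\varphi_h^k\|_{(\Lambda_h^k)^{-1}}\le\sqrt{2dK\ln(1+K/\lambda)}.
\]
Multiplying by $2\beta N^H H=2c_\beta dH^2N^H\sqrt{\varGamma}$ yields the stated bound. The martingale term is of lower order and is absorbed into $c_\beta$.
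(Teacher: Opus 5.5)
Your proposal follows the paper's proof essentially step for step: the self-normalized concentration bound (Lemma~\ref{lma53}, whose covering argument the paper delegates to \cite{jin2020provably}), the multipattern bias estimate (Lemma~\ref{lma55}), optimism by backward induction (Lemma~\ref{lma56}), and the key recursion of Lemma~\ref{lma57} obtained from $\max_j\le\sum_j$ over the batch together with the uniformly chosen next state, where your dual-representation step is equivalent to the paper's subadditivity/translation argument. Your factor $2\beta$ in that recursion is actually the correct one (the paper silently replaces $Q_h^k(x_h^k,a_h^k)$ by $\langle\varphi_h^k,w_h^k\rangle$, dropping the bonus); consequently neither argument strictly yields the literal constant $c_\beta$ in the statement, because the factor $2$ and the Azuma term cannot be ``absorbed into $c_\beta$'' when $c_\beta$ also fixes $\beta$, but this is a constant-level slip shared with the paper and does not affect the $\mathcal{O}(H^2N^H\sqrt{K})$ rate.
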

The proof of Theorem \ref{t:main} is provided in the Appendix. It is evident from the proof of Lemma \ref{lma57}, that the
expression $N^H$ in the estimate above is due to the worst case estimate \eqref{Psi-worst} for the mini-batch measure of risk. For the
mini-batch measures of risk of the form \eqref{Psi-mix}, the factor is $(1 + \varkappa(N-1))^H$.

\section{Application to the stochastic assignment problem}
\label{s:assignment}

We provide now an illustration of the operation of our methods on an example for which comparison with the theoretically optimal policy is possible.

\subsection{Problem formulation}

In the stochastic assignment problem, $H$ jobs with independent random values $C_1,\dots, C_H$ arrive sequentially at times $h=1,\dots,H$. The distribution function of each job is $F_h(C_h)$; we learn each job's value at the time of its arrival, but we do not know the values of future jobs.

At the beginning of the process, we are given $H$ random ``workers'' with weights $B_1^{(1)}\le B_1^{(2)}\le \dots \le B_1^{(H)}$.
The problem is to assign
one worker to each job, at the time of the job’s arrival, when the job's value becomes known, so that each worker is used exactly once and every job is processed. In the risk-neutral case, we want to maximize the following function:
\begin{equation}
 \Eb\bigg[\sum_{h=1}^H R(C_h,B_h^{(a_h)})\bigg],
\end{equation}
where the action $a_h$ is the rank of the worker assigned to the job $C_h$ at stage $h$. Here, we assume that the workers available at stage $h\in [1:H]$ are ranked according to their
weight:
$B_h^{(1)}\le B_h^{(2)}\le \dots \le B_h^{(H-h+1)}$.
The action $a_h$ is understood as the rank of the worker assigned to the job $C_h$ in this ordered list; that is, the weight of the
assigned worker is $B_h^{(a_h)}$. The rewards $R(C_h,B_h^a)$ may be random, with distributions fully defined by $(C_h,B_h^a)$. We use the notation $r(C_h,B_h^a) = \Eb \big[ R(C_h,B_h^a)\,\big|\, C_h,B_h^a\big]$.

We define the state space at stage $h$ as $\Xb_h = \Rb^{H-h+2}$, $h\in [1:H]$; the state contains
the ordered weights of the workers available at stage $h$, and the value of the job $C_h$ that just arrived:
\[
x_h = \big(B_h^{(1)},\dots,B_h^{(H-h+1)},C_h\big), \quad h\in [1:H].
\]
The action space at stage $h$ is $\Ab_h = [1:H-h+1]$. The next state is
\begin{equation}
\label{x-next}
x_{h+1} = \big(B_{h+1}^{(1)},\dots,B_{h+1}^{(H-h)},C_{h+1}\big),
\end{equation}
with
\[
B_{h+1}^{(j)}  = \begin{cases} B_h^{(j)} & \text{if } j\in [1:a_h-1], \\
B_h^{(j+1)} & \text{if } j\in [a_h: H-h],
\end{cases}
\]
and with $C_{h+1}$ being the value of the job at the stage $h+1$.

\subsection{The learning model}

As in \citep{derman1972sequential}, we consider the reward function $r(C,B) = CB$.
In the expected value maximization problem, for any policy $\pi$ we have
\begin{equation}
    \label{Q-assign}
Q_h^\pi(x_h,a) = C B_h^{(a)} + \sum_{j=1}^{H-h} w_{h,j}^\pi B_{h+1}^{(j)},\quad h\in [1:H].
\end{equation}
The coefficient $w_{h,j}^\pi$ is the expected value of the job done by the $j$th remaining worker after the worker with rank $a$ has been used at stage $h$. This insight allows us to solve the expected value problem theoretically and develop the optimal policy, as described by \cite{derman1972sequential}.
The key observation is that by the Hardy-Littlewood-Polya inequality, the sum of the products
in \eqref{Q-assign} is the largest if the sequences
\[
\big\{C,w_{h,1}^\pi, \dots, w_{h,H-h}^\pi\big\} \quad \text{and}\quad
\big\{ B_h^{(a)},B_{h+1}^{(1)},\dots,B_{h+1}^{(H-h)}\big\}
\]
have identical rank patterns. Thus, the best $a$ is the one for which $B_h^{(a)}$ has the same rank among the workers, as does $C$ have among the $w_{h,j}$'s. This allows for the recursive calculation of the
optimal policy. The general formula is:
 \begin{equation}
 \label{optimal-weights}
 w_{h,j}^\pi = \underbrace{\int\limits_{w_{h+1,j-1}^\pi}^{w_{h+1,j}^\pi}\hspace{-0.5em}z \;dF_C(z)}_{\text{worker $j$ does job $C$}}
 + \underbrace{w_{h+1,j-1}^\pi\Pb\big[C<w_{h+1,j-1}^\pi\big]}_{\text{worker lower than $j$ does job $C$}}
 + \underbrace{w_{h+1,j}^\pi\Pb\big[C>w_{h+1,j}^\pi\big]}_{\text{worker higher than $j$ does job $C$}};
 \end{equation}
 in the border cases,  the second or the third term disappears. If $\pi$ is the optimal policy, the above recursion allows to calculate the thresholds.

Our first objective is to test whether Algorithm 1 with lazy learning can identify the optimal policy.

At stage $h$, we work with the
following $H-h+1$ features of the state-control pair:
\begin{equation}
    \label{features-assign}
\varphi_{hj}(x,a) =
\begin{cases} B_h^j & \text{if } j\in [1:a-1], \\
B_h^{j+1} & \text{if } j\in [a: H-h],\\
C B_h^{(a)} & \text{if } j=H-h+1.
\end{cases}
\end{equation}
It follows from \eqref{Q-assign} that the $Q$-factors can be indeed represented as linear functions of these features, as in \eqref{Q-w-exact}. We shall not use the theoretical insight about the correct values of the regression coefficients, but rather try to learn them by Algorithm \ref{a:basic}.

In order to make learning more challenging, we consider a version of the problem with Bernoulli rewards:
\[
R(C,B) = \begin{cases} C & \text{with probability $B$},\\ 0 & \text{otherwise.}
\end{cases}
\]
Its expected value is identical to the multiplicative reward function, and the theoretical feature weights should remain optimal.

In the test examples, we use $H=8$, the jobs distributed uniformly in $(0,1)$, and initial workers sampled from $(0,1)$ as well. The method parameters were set to  $\lambda=0.1$ and $\beta= 0.1$.
 Because of high variance of the rewards, the problem is rather difficult.

\begin{figure}[h!]
\centering{
\includegraphics[width=6in]{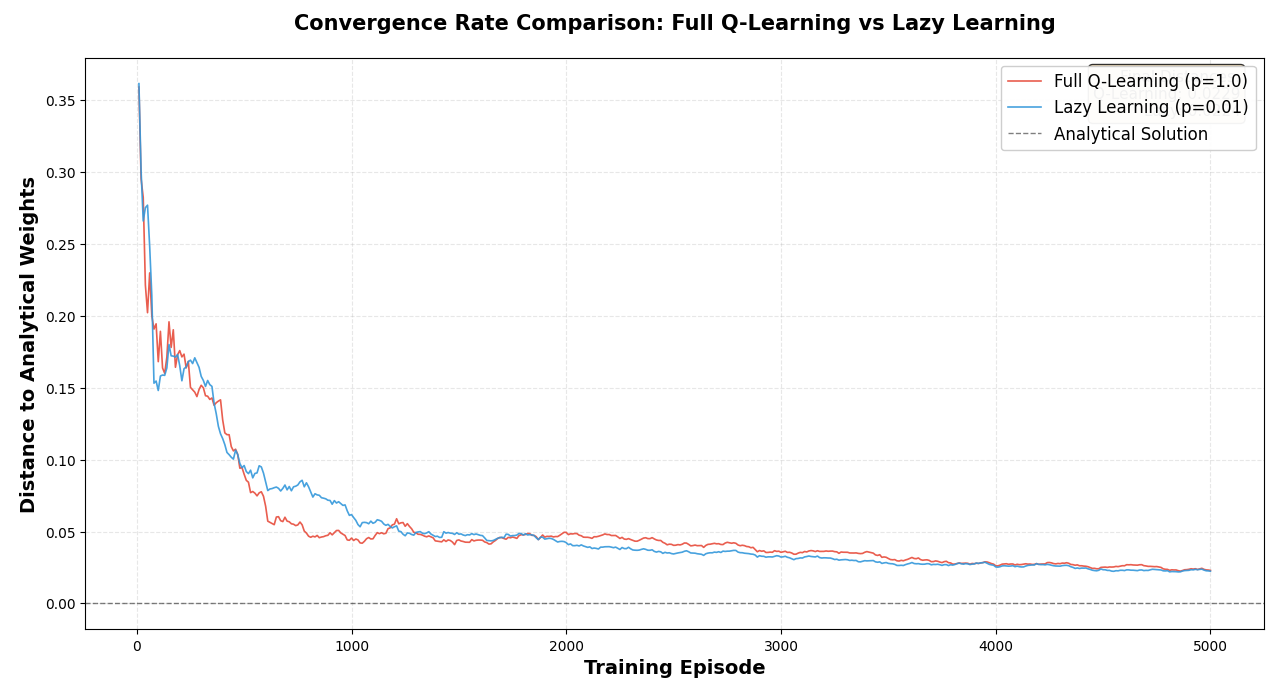}
  }
  \caption{The learning error $d(w^k,\hat{w})$ as a function of the number of episodes during full $Q$-learning and ``lazy'' learning with the batch size $N=1$ on an 8-stage stochastic assignment problem
  with Bernoulli rewards.}
   \label{f:configurations}
\end{figure}

Figure \ref{f:configurations} presents the distance of the feature weights computed by Algorithm \ref{a:basic} to the optimal weights given by \eqref{optimal-weights} in two versions
of Algorithm \ref{a:basic}: full $Q$ learning and ``lazy'' learning, with $p^{\textup{renew}} = 0.01$. Both versions of the method were trained on identical data over 5000 episodes. To eliminate
the dimension effect, the distance between
weight vectors was computed as follows:
\[
d(w,v) = \sqrt{ \frac{1}{H}\sum_{h=1}^H  \frac{1}{H-h+1}\sum_{j=1}^{H-h+1} (w_{h,j} - v_{h,j})^2}.
\]

Table \ref{tab:performance}
provides performance characteristics of the learned policies (evaluated over 10,000 validation episodes), and the learning times.
We can see from these results that with $N=1$ the ``lazy'' learning method is comparable to the full $Q$-learning algorithm, at about half of the training time. The performance of the two policies learned is statistically indistinguishable.
Still, their bias to the optimal performance is  statistically significant.
The last two rows of Table \ref{tab:performance} provide the comparison of the policies learned with the mini-batch size $N=2$. In this case, the ``lazy'' learning outperforms the full $Q$-learning and calculates a policy whose performance is statistically indistinguishable from the optimal one. All computations were conducted on an AMD Ryzen AI Max+ processor.

\begin{table}[htbp]
\caption{Expected Performance of the Policies Learned for the Assignment Problem.  }
\centering
\begin{tabular}{@{}lc@{}c@{}c@{}c@{}}
\toprule
\textbf{Method} & \textbf{Batch Size} &\textbf{Time (s)} & \textbf{Mean Reward} & \phantom{XXX}\textbf{{\boldmath $t$}-Statistic}\phantom{XXX}\\
\midrule
Analytical      & & \phantom{XXXX}   \phantom{XXXX}      &  2.4492 &  \\
Full Q-Learning & 1 & 1150 &  2.4475  &  5.1349  \\
Lazy Learning   & 1 & 603  &  2.4477  & 4.7339 \\
Full Q-Learning & 2 & 1907 &  2.4486  &  2.4063  \\
Lazy Learning   & 2 & 835  &  2.4489  & 1.3688 \\
\bottomrule
\end{tabular}
\label{tab:performance}
\end{table}

The convergence of the distance of the feature weights to the optimal values is illustrated in
Figure \ref{f:configurations_2}.

\begin{figure}[h!]
\centering{
\includegraphics[width=6in]{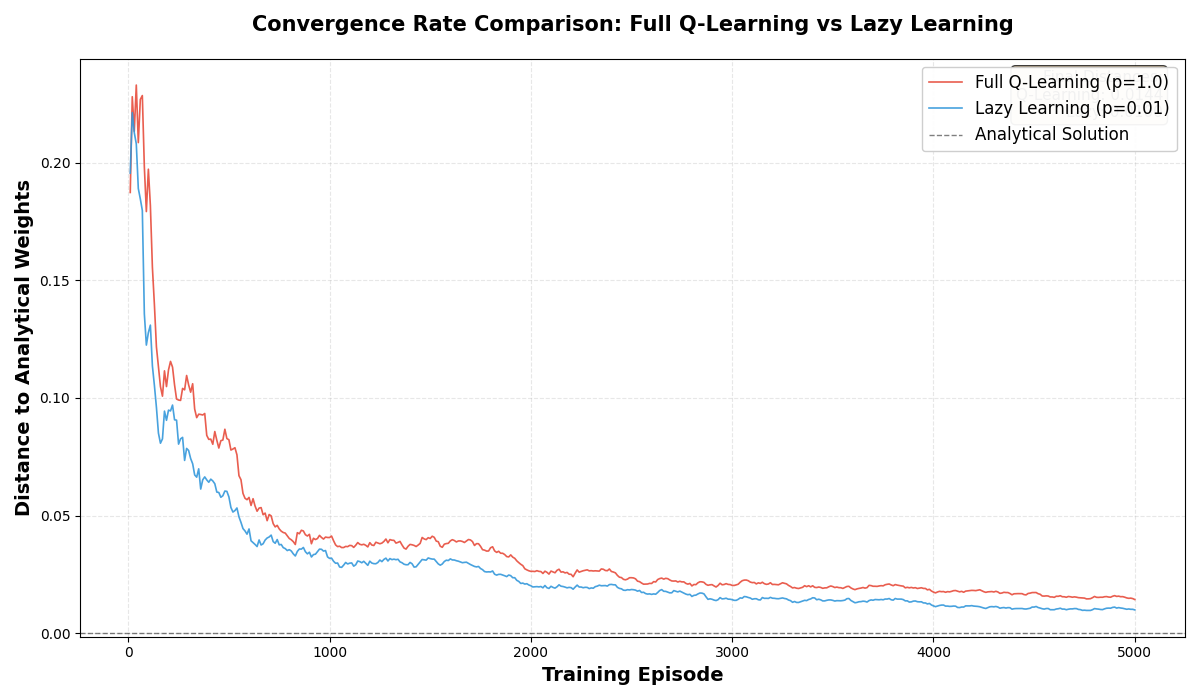}
  }
  \caption{The learning error $d(w^k,\hat{w})$ as a function of the number of episodes during full $Q$-learning and ``lazy'' learning with the batch size $N=2$ on an 8-stage stochastic assignment problem with Bernoulli rewards.}
   \label{f:configurations_2}
\end{figure}

It is evident from these results that the ``lazy'' learning method is not only economical, but also more accurate. Presumably, avoiding the optimization step at every instance of the backward loop stabilizes the
method and facilitates a more accurate policy evaluation, which in turn results in a more effective optimization at the update steps.

The next experiments concerns the risk-averse setting.
We use the mini-batch risk measure \eqref{batch-sup} with $N=2$,  mixed with the conditional expectation, as in \eqref{Psi-mix}.

In our maximization setting, with batch observations $v_h^j = R_h^{j} + V_{h+1}( Y_{h+1}^{j})$, $j=1,2$,
we use the function
\begin{equation}
\label{example=Psi}
\varPsi_h(v_h^1,v_h^2) =(1 - \varkappa) (v_h^1 +v_h^2)/2 + \varkappa \min(v_h^1,v_h^2)
\end{equation}
for calculating the regression targets.
We still use for every policy $\pi$ the approximation
\begin{equation}
\label{Q-w-approx}
Q_h^\pi(x,a)   \approx  \varphi(x,a)^\top w_h^\pi,
\end{equation}
with some weight vector $w_h^\pi$, and we want to learn the weights corresponding to the optimal policy.

We consider two cases: deterministic rewards, and Bernoulli rewards, as above. In the risk-neutral case, we use $\varkappa = 0$, and in the risk-averse case,
we use $\varkappa = 0.5$. In both cases we use lazy learning with $p^{\textup{renew}} = 0.01$,  $\lambda=0.1$, and $\beta= 0.1$.

\begin{figure}[h]
\centering{
\includegraphics[width=6in]{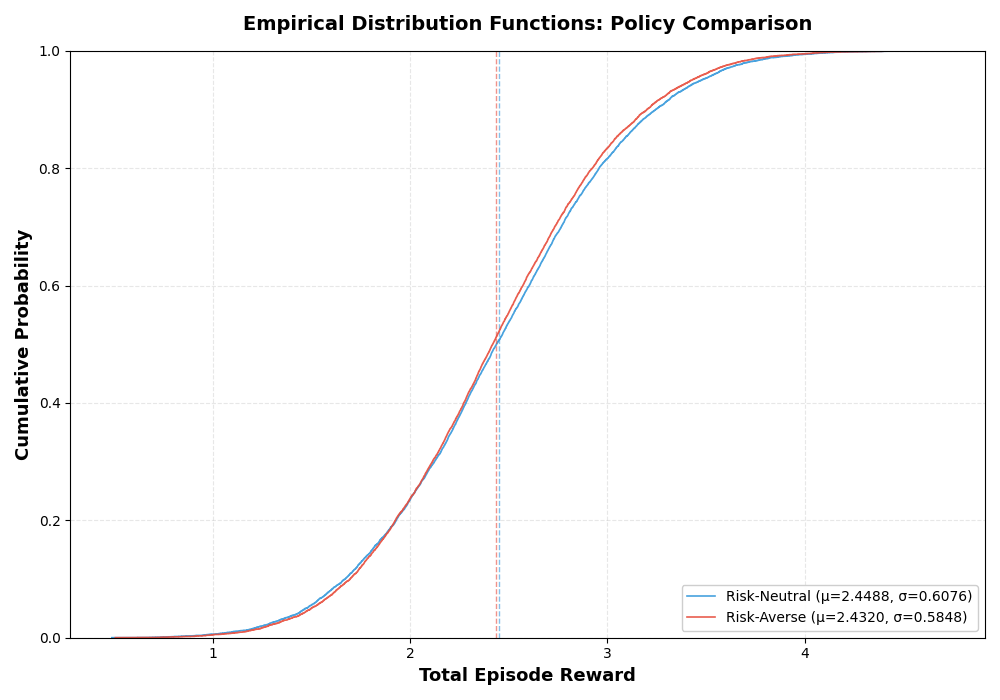}
  }
  \caption{The distribution functions of the optimal policy performance an 8-stage stochastic assignment problem in the risk-neutral and the risk-averse cases.}
\label{f:risk-averse}
\end{figure}

Figure \ref{f:risk-averse} presents the distribution functions of the optimal expected-value policy and the risk-averse policy in the case of deterministic
rewards, evaluated over 10,000 independent test episodes. In the case of Bernoulli rewards, the graphs are very similar.

\begin{table}[h!]
\caption{Performance of the Optimal Policies for the Assignment Problem}
\centering
\begin{tabular}{ccccc}
\toprule
\textbf{Method} & Rewards & \textbf{Time (s)} & \textbf{Average Reward} & \textbf{Standard Deviation}\\
\midrule
Risk-Neutral & Deterministic   & 718   & 2.4488  & 0.6076   \\
Risk-Averse & Deterministic     & 762  & 2.4320  & 0.5848  \\
Risk-Neutral & Bernoulli        & 717 &  2.4515  & 0.8822  \\
Risk-Averse & Bernoulli        & 762  & 2.4273  & 0.8713   \\
\bottomrule
\end{tabular}
\label{t:risk_performance}
\end{table}

 Table \ref{t:risk_performance} summarizes the results obtained. The learning time is higher than
in Table \ref{tab:performance}, because a batch of $N=2$ observations are generated at each forward step and used in regression. This also explains the high accuracy of the expected-value solutions. Again, in the mini-batch version, the lazy learning method is statistically indistinguishable from the analytical solution and still significantly cheaper.

We can see from these results that the risk-averse learning method is effective and robust. It finds policies that trade expected performance for the standard deviation: a feature required from this kind of modeling approach.
The variations are small  because of the difficulty to control risk in this problem.

\section{Application to a short-horizon multi-armed bandit problem}
\label{s:bandit}

In this section, we consider a multi-armed bandit problem with a short horizon and substantial uncertainty.

At each stage $h = 1, \ldots, H$, a decision-maker must allocate a fixed capital $C$ across $n$ arms. Each arm~$j$ has an unknown reward distribution characterized by unknown mean $\mu_j$ and unknown variance $\sigma_j^2$. The decision-maker learns about these parameters through Bayesian updating based on observed rewards. $C$ is integer, and allocations are restricted to be integer as well.

Problems of this structure are well-investigated; see \citep{tran2012knapsack} and \cite[Sec. 7.6]{lattimore2020bandit}. The specificity of our example is that we consider a short horizon, rather than asymptotic performance, and this involves
substantial risk.

\subsection{Problem formulation}

At each stage $h$, the decision-maker chooses an allocation vector:
$a_h = (a_{h1}, a_{h2}, \ldots, a_{hn}) \in \Ab$,
where the feasible action set is:
$\Ab = \left\{a \in \Zb_+^n : \sum_{j=1}^n a_j = C\right\}$.

When arm $j$ is pulled with allocation $a_{hj} > 0$:
a single observation $R_{hj} \sim \Nc(\mu_j, \sigma_j^2)$ is drawn, and
the immediate payoff from arm $j$ is $a_{hj}  R_{hj}$. Thus, the
total immediate reward at stage $h$ is:
\[
 R(a_h) = \sum_{j: a_{hj} > 0} a_{hj} R_{hj}.
 \]
In this model, the random rewards do not depend on the state, which is the information state to be described below.
The allocation amount $a_{hj}$  does not affect information gain---we observe $R_{hj}$ as long as $a_{hj}>0$.


Since the true arm parameters $(\mu_j, \sigma_j^2)$ are unknown, we maintain posterior distributions based on observed rewards. We use a normal-inverse-gamma conjugate prior, see
\cite[Ch. 3]{gelman2013bayesian}.
At stage $h$, the state is:
\begin{equation}
\label{bandit-state}
x_h = (\mu_{h1}, \kappa_{h1}, \alpha_{h1}, \beta_{h1}, \ldots, \mu_{hn}, \kappa_{hn}, \alpha_{hn}, \beta_{hn}) \in \Rb^{4n}.
\end{equation}
For each arm $j$, the four hyperparameters characterize the posterior distribution:
$\mu_{hj}$ is the mean,
$\kappa_{hj} > 0$ is the precision weight,
$\alpha_{hj} > 0$ and $\beta_{hj} > 0$ are shape parameters for the inverse-gamma distribution of variance. In this setting, we have
$\sigma_{j}^2 \sim \text{Inv-Gamma}(\alpha_{hj}, \beta_{hj})$ and
$\mu_{j} \mid \sigma_{j}^2 \sim \Nc\big(\mu_{hj}, {\sigma_{j}^2}/{\kappa_{hj}}\big)$.

After taking action $a_h$ and observing rewards, the state updates according to standard normal-inverse-gamma posterior formulas.
If $a_{hj} > 0$ (arm $j$ is pulled), we
observe $R_{hj}$ and update:
\begin{equation}
\label{info-state-update}
\begin{aligned}
\kappa_{h+1,j} &= \kappa_{hj} + 1, \\
\mu_{h+1,j} &= \frac{\kappa_{hj}}{\kappa_{hj} + 1} \mu_{hj} + \frac{1}{\kappa_{hj} + 1} R_{hj}, \\
\alpha_{h+1,j} &= \alpha_{hj} + \frac{1}{2}, \\
\beta_{h+1,j} &= \beta_{hj} + \frac{\kappa_{hj}(R_{hj} - \mu_{hj})^2}{2(\kappa_{hj} + 1)}.
\end{aligned}
\end{equation}
If $a_{hj} = 0$ (arm $j$ is not pulled), the parameters associated with arm $j$ remain unchanged.

\subsection{The learning model}

We approximate the $Q$-functions using linear function approximation, as in \eqref{Q-w-approx}. Permutation invariance with respect to arm labels is
achieved through re-ordering the arms by $\mu_{hj} + 3\hat{\sigma}_{hj}$. Here, $\hat{\sigma}^2_{hj}$ is the hypothetical variance of arm's $j$ reward if $R_{hj} = \mu_{hj}$ were observed:
\begin{equation}
\label{sigma-update}
\hat{\sigma}_{hj} = \begin{cases}
\sqrt{\frac{\beta_{hj}}{\alpha_{hj} - 1/2}} &  \text{if $a_{hj}>0$},\\
{\sigma}_{hj} & \text{otherwise}.
\end{cases}
\end{equation}
Denoting the re-ordering permutation by $\ell(\Cdot)$, we define the state--action features as follows:
\[
\varphi(x_h, a_h)^\top = \begin{bmatrix}
\mu_{h,\ell(1)} ,
\cdots  ,
\mu_{h,\ell(n)} ,
\hat{\sigma}_{h,\ell(1)} ,
\cdots ,
\hat{\sigma}_{h,\ell(n)} ,
\sum_{j=1}^n a_{hj} \mu_{hj}
\end{bmatrix}.
\]
Observe that the action $a_h$ affects not only the last feature (the expected reward), but also the standard deviations, via the hypothetical update formula \eqref{sigma-update}. This update may also result
in the reordering of the arms.

In our learning experiments, before every episode, we sample  the arm means $\mu_j \sim U(0.3,0.7)$ and the standard deviations $\sigma_j \sim U(0.15,0.25)$, independently from each other.
We set $C=3$, $n=5$, and $H=4$, so that the expected reward of a purely random policy is 6. We use lazy learning with $p^{\text{renew}} = 0.01$ throughout.

In the risk-averse case, as in the previous section,
we use the mini-batch risk measure \eqref{batch-sup} with $N=2$. In our maximization setting, this corresponds to using the function
\eqref{example=Psi} for calculating the regression targets from each batch. All parameters were the the same as in the previous example.

Figure \ref{f:bandit_354} presents the distribution functions of the cumulative rewards resulting from two policies learned by Algorithm \ref{a:basic}: the risk-neutral, and the risk-averse.
As in the previous example, the distribution functions were evaluated over 10,000 test episodes, independent on the training set.
In sub-figure (a),
we set a highly non-informative initial state $\mu_{1j} = 1$, $\kappa_{1j} = 1$, $\alpha_{1j} = 2$, $\beta_{1j} = 1$ for all arms $j$,
corresponding to the initial variance estimate of $\hat{\sigma}_{1j}^2 = 1$. In sub-figure (b), the initial information state is centered:
$\mu_{1j} = 0.5$, $\kappa_{1j} = 1$, $\alpha_{1j} = 2$, $\beta_{1j} = 0.02$ for all arms~$j$. This corresponds to the expected mean 0.5 and
the expected standard deviation 0.2 of the arm distributions sampled before each episode.

\begin{figure}[h]
\centering
   \begin{subfigure}[b]{1\textwidth}
   \centering
   \includegraphics[width=0.8\textwidth]{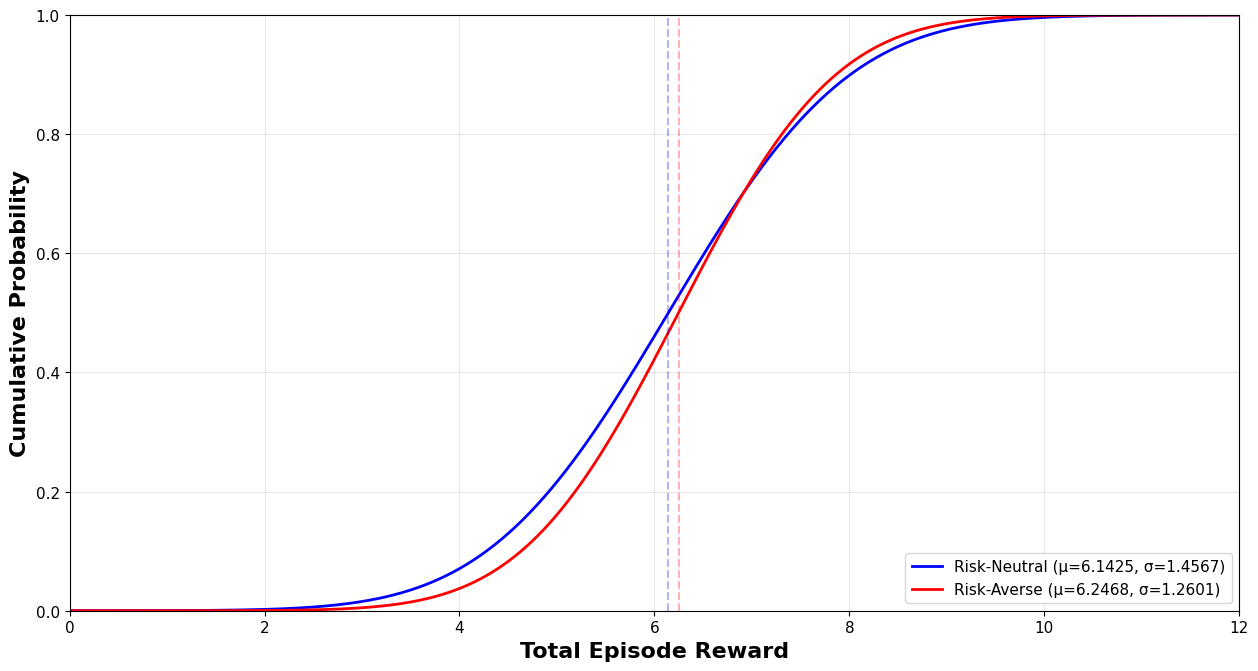}
   \caption{The initial information state is non-informative: (1,1,2,1).}
   \label{fig:Ng1}
   \end{subfigure}
 \vspace{1ex}\\
   \begin{subfigure}[b]{1\textwidth}
   \centering
   \includegraphics[width=0.8\textwidth]{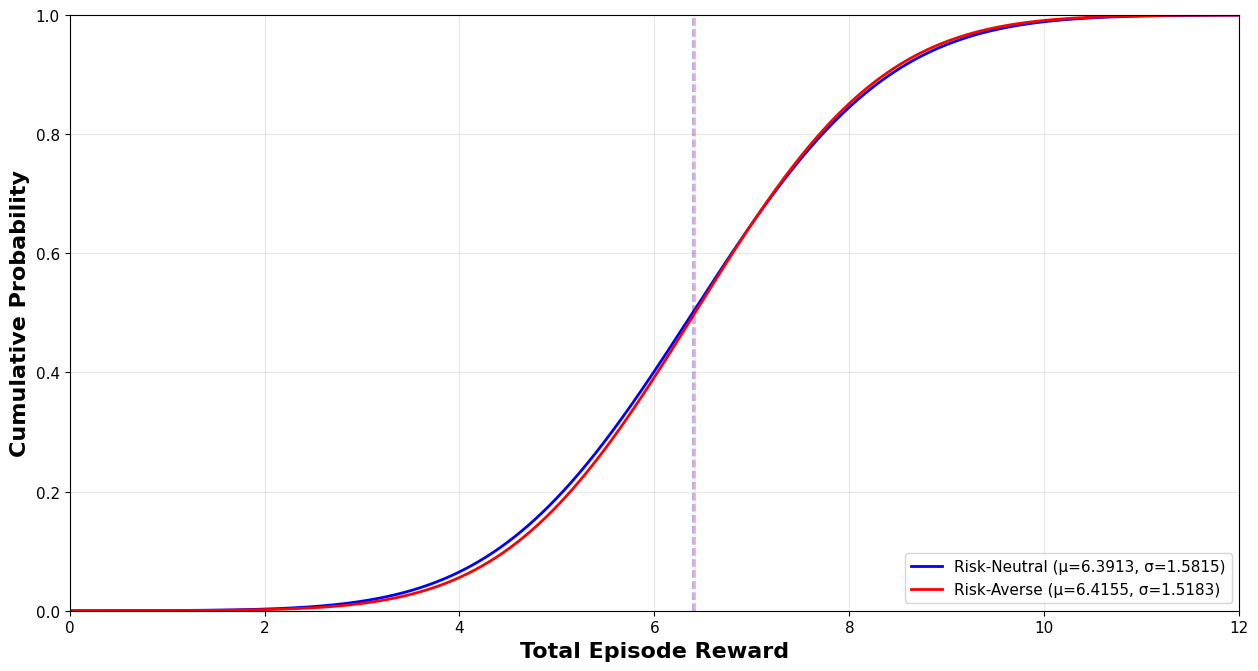}
   \caption{The initial information state is centered: (0.5,1,2,0.04).}
   \label{fig:Ng2}
   \end{subfigure}

\caption{The distribution functions of the optimal policy performance in a multi-armed bandit problem with $C=3$, $n=5$, and $H=4$, in the risk-neutral and the risk-averse cases.
}
\label{f:bandit_354}
\end{figure}

An interesting conclusion from these experiments is that the risk-averse solution outperforms the risk-neutral solution not only in the standard deviation, but also in the expected value.
This is due to the fact that our feature-based model
is crude. Presumably, incorporating risk-aversion into learning makes the solution more robust with respect to model imperfections. This phenomenon is more pronounced in case (a), where a highly
non-informative initial state was used. A similar observation was made by \cite{kose2021risk} in connection with risk-averse temporal difference learning. However, when the number of stages increases, this advantage
diminishes, and the usual trade of mean for standard deviation can be observed. This can be seen from the last row of
Table \ref{t:bandit_performance} which summarizes the performance of the learned policies in the three cases considered.

\begin{table}[h!]
\caption{Performance of the Optimal Policies for the Bandit Problem}
\centering
\begin{tabular}{cccccc}
\toprule
\textbf{Stages} & Initialization &\textbf{Method} & \textbf{Time (s)} & \textbf{Average Reward} & \textbf{Standard Deviation}\\
\midrule
4 & Non-Informative & Risk-Neutral    & 712   & 6.1425  & 1.4567   \\
4 &  Non-Informative & Risk-Averse    & 736   & 6.2468  &  1.2601   \\
4 &  Centered & Risk-Neutral          & 696   & 6.3913  &  1.5815 \\
4 & Centered & Risk-Averse            &  713  & 6.41.55 & 1.5183   \\
8 & Non-Informative & Risk-Neutral    & 1636   & 12.8314  & 2.1944   \\
8 &  Non-Informative & Risk-Averse    & 1633   & 12.7981  &  2.1445   \\
\bottomrule
\end{tabular}
\label{t:bandit_performance}
\end{table}


The main conclusion from our experiments reported in the last two sections is that our method is fast and reliable, and is applicable to any problem to which the classical $Q$-learning method applies.

\appendix

\section{Regret analysis}
\label{s:regret}

In what follows, we denote by $w_h^k$, $Q_h^k(\Cdot,\Cdot)$, and $V_h^k(\Cdot)$ the weight vector, the $Q$-factors, and the
value function approximation, respectively, calculated by the algorithm in episode $k$ and stage $h$. We use the notation
$\varPsi^{k,\tau}_{h+1} = \varPsi_h\big(\big\{ V_{h+1}^k( Y_{h+1}^{\tau,j})\big\}_{j\in [1:N]}\big)$.
Further, $w_h^{\pi_k}$, $Q_h^{\pi_k}(\Cdot,\Cdot)$, and $V_h^{\pi_k}(\Cdot)$ are the weights, $Q$-factors, and the value functions,
respectively, associated with the policy $\pi_k$ determined at iteration $k$ at stage $h$.

We follow the general line of argument of \cite{jin2020provably}, with necessary refinements for the risk-averse case.

First, we provide a number of auxiliary results. The first one follows from the boundedness of the costs.
\setcounter{theorem}{0}

\begin{lemma}
\label{lma52}
For any policy $\pi$, let $\{w_h^\pi\}_{h\in[1:H]}$ be the corresponding weights such that $Q_h^\pi(x,a) = \langle \varphi(x,a), w_h^\pi \rangle$. Then $\|w_h^\pi\|\leq 2H\sqrt{d} $.
\end{lemma}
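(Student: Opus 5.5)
By the multipattern structure of Definition~\ref{d:multipattern}, for every $(x,a)$ we have
\[
Q_h^\pi(x,a) = r_h(x,a) + \sigma_h\big(\Pb_h(x,a),V_{h+1}^\pi\big) = \varphi(x,a)^\top\big(\theta_h + \rho_h(V_{h+1}^\pi)\big).
\]
This identifies the weights explicitly as $w_h^\pi = \theta_h + \rho_h(V_{h+1}^\pi)$, which is the representation \eqref{Q-w-exact}. By the triangle inequality and the assumption $\|\theta_h\|\le\sqrt d$, it then suffices to show $\|\rho_h(V_{h+1}^\pi)\|\le (2H-1)\sqrt d$; the cruder target $H\sqrt d$ already suffices.

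The first step is to bound the value function. I would prove by backward induction that $0\le V_{h}^\pi\le H-h+1$ componentwise. At $h=H$ this is $r_H\in[0,1]$. For the inductive step, $V_h^\pi(x) = r_h + \sigma_h(\Pb_h(x,\pi_h(x)),V_{h+1}^\pi)$ with $r_h\in[0,1]$. Monotonicity, normalization (which follows from positive homogeneity, $\sigma(0)=0$) and translation give
\[
0=\sigma(0)\le \sigma(V_{h+1}^\pi)\le \sigma\big((H-h)\1\big) = H-h.
\]
In particular $0\le V_{h+1}^\pi\le H\1$.

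Next, I bound each pattern. Each $\rho_{hj}$ is coherent, so monotonicity and positive homogeneity give
\[
\rho_{hj}(0)=0\le \rho_{hj}(V_{h+1}^\pi)\le \rho_{hj}(H\1) = H\rho_{hj}(\1).
\]
I would avoid relying on translation to assert $\rho_{hj}(\1)=1$; since the definition separately assumes $\|\rho_h(\1)\|\le\sqrt d$, I use only homogeneity here. Because the bound is sandwiched between $0$ and $H\rho_{hj}(\1)$ componentwise, with $\rho_{hj}(\1)\ge 0$, we get $\|\rho_h(V_{h+1}^\pi)\|\le H\|\rho_h(\1)\|\le H\sqrt d$. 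Combined with the first paragraph this gives $\|w_h^\pi\|\le \sqrt d + H\sqrt d\le 2H\sqrt d$.

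The only delicate point is the identification of $w_h^\pi$. If the features do not span $\Rb^d$, weights satisfying $Q_h^\pi=\varphi^\top w$ are not unique, and the lemma must be read as referring to the canonical choice $w_h^\pi=\theta_h+\rho_h(V_{h+1}^\pi)$ produced by the model. I would state this explicitly. For $h=H$ (no transition) the weights reduce to $\theta_H$, or the bound is trivial.
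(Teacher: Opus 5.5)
Your proposal is correct and follows the paper's proof. You identify $w_h^\pi=\theta_h+\rho_h(V_{h+1}^\pi)$, bound $V_{h+1}^\pi\le H$ by backward induction using monotonicity and translation, and then get $\|\rho_h(V_{h+1}^\pi)\|\le H\sqrt{d}$ from monotonicity and positive homogeneity. You are more careful than the paper in two places: you establish the lower bound $V_{h+1}^\pi\ge 0$, which the norm bound on $\rho_h(V_{h+1}^\pi)$ actually requires and the paper leaves implicit, and you note that the lemma must refer to the canonical weights when the features do not span $\Rb^d$.
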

\begin{proof}
It follows from \eqref{multipattern1}--\eqref{multipattern2} that
 \[
Q_h^\pi(x,a) = r_h(x,a) +\sigma^{(N)}_h(\Pb_h(x,a), V_{h+1}^\pi)  = \langle \varphi(x,a), \theta_h + \rho_h(V_{h+1}^\pi) \rangle,
 \]
 and thus $\|w_h^\pi| \le \|\theta_h\|+ \|\rho_h(V_{h+1}^\pi)\| $. By assumption, $\|\theta_h\|\le \sqrt{d}$. Furthermore, due to the boundedness of the costs, and the monotonicity and translation properties of
 the mini-batch measure of risk, we have
 \[
V_h^\pi(x) \le 1 + \sigma^{(N)}_h(\Pb_h(x,\pi(x)), V_{h+1}^\pi)
\le 1 + \sigma^{(N)}_h\Big(\Pb_h(x,\pi(x)), \1 \max_y V_{h+1}(y)\Big) \le 1 +  \max_y V_{h+1}(y).
 \]
Consequently, $V_h^\pi(\Cdot) \le H-h+1\le H$, and (by the positive homogeneity) $\|\rho_h(V_{h+1}^\pi)\| \le H\sqrt{d}$.
\end{proof}

The second auxiliary result is the concentration inequality for mini-batch risk estimators.

In the algorithm, we shall use $\beta=c_\beta d H \sqrt{\varGamma}$ where $\varGamma=\ln(2dKH/p)$, and $c_\beta$ is a constant to be determined later.
\begin{lemma}
\label{lma53}
There exists an absolute constant C that is independent of $c_\beta$ such that for any fixed $p\in(0,1)$, if we let $E$ be the event that for all $k\in [1:K]$ and all $h\in [1:H]$:
\begin{equation*}
\bigg\|\sum_{\tau=1}^{k-1}
\varphi_h^\tau\Big[\varPsi^{k,\tau}_{h+1}-\sigma^{(N)}_h( \Pb_h(x_h^\tau,a_h^\tau), V_{h+1}^k)\Big]\bigg\|_{(\Lambda_h^k)^{-1}}\leq C\cdot dH\sqrt{\chi},
\end{equation*}
where $\chi=\ln[2(c_\beta+1)dKH/p]$, then $\Pb(E)\geq 1-p/2$.
\end{lemma}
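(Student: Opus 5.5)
This is the analogue of Lemma B.3 in \cite{jin2020provably}. The difficulty is that $V_{h+1}^k$ depends on all data up to episode $k$, including the batches $Y_{h+1}^\tau$. So a martingale argument cannot be applied directly to the fixed function $V_{h+1}^k$. The plan is the standard one: uniform concentration over a function class, via a covering argument.

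\textbf{Step 1: the function class.} Every $V_{h+1}^k$ produced by Algorithm~\ref{a:basic} lies in
\[
\Vb(L,B)=\Big\{x\mapsto \min\Big(\max\big\{\min_a [w^\top\varphi(x,a)-\beta\|\varphi(x,a)\|_{\Lambda^{-1}}],0\big\}\Big)\Big\},
\]
where $\|w\|\le L$, $\beta\le B$, and $\lambda_{\min}(\Lambda)\ge\lambda$. To bound $L$, note that $\|w_h^k\|\le 2H\sqrt{dk/\lambda}$: the regression targets are bounded by $O(H)$, because $\varPsi_h$ is monotone and translation invariant and $0\le V_{h+1}^k\le H$ (after truncation). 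Then apply the usual Cauchy--Schwarz estimate for ridge regression.

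\textbf{Step 2: fixed-$V$ concentration.} Fix $V$ with $\|V\|_\infty\le H$. The increments $\eta_\tau=\varPsi_h(\{V(Y_{h+1}^{\tau,j})\}_j)-\sigma_h^{(N)}(\Pb_h(x_h^\tau,a_h^\tau),V)$ form a martingale difference sequence with respect to $\Fc_h^{\tau}$, by \eqref{mini-batch-estimate}. They are bounded by $2H$, since $\varPsi_h$ is a coherent functional of $N$ values each in $[0,H]$. The self-normalized bound of Abbasi-Yadkori et al.\ then gives, with probability $1-\delta$,
\[
\Big\|\sum_\tau\varphi_h^\tau\eta_\tau\Big\|^2_{(\Lambda_h^k)^{-1}}\le 4H^2\big[\tfrac d2\ln(1+k/\lambda)+\ln(1/\delta)\big].
\]

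\textbf{Step 3: covering.} The map $(w,\Lambda^{-1})\mapsto V$ is Lipschitz in sup norm: $|V-V'|_\infty\le\|w-w'\|+\beta\sqrt{\|\Lambda^{-1}-\Lambda'^{-1}\|_F}$. Hence the $\epsilon$-covering number satisfies
\[
\ln\mathcal N_\epsilon\le d\ln(1+4L/\epsilon)+d^2\ln(1+8\sqrt d B^2/(\lambda\epsilon^2)).
\]
For the perturbation, note that $\varPsi_h$ is $1$-Lipschitz in sup norm (monotonicity plus translation), and so is $\sigma^{(N)}_h$. Replacing $V_{h+1}^k$ by its $\epsilon$-net element $\tilde V$ therefore changes each increment by at most $2\epsilon$. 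This adds at most $8k^2\epsilon^2/\lambda$ to the squared norm. Take a union bound over the net, over $h\in[1:H]$ and over $k\in[1:K]$, with $\delta=p/(2HK\mathcal N_\epsilon)$, and choose $\epsilon=dH/k$ and $\lambda=1$.

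\textbf{Step 4: final bookkeeping.} This is the main obstacle. With $B=\beta=c_\beta dH\sqrt{\varGamma}$, the $d^2\ln(\cdot)$ term must be bounded by $C d^2\chi$ with $\chi=\ln[2(c_\beta+1)dKH/p]$. The logarithm of $c_\beta$ appears only inside $\chi$, so $C$ is independent of $c_\beta$. Combining everything gives a squared bound of order $H^2 d^2\chi$, and taking square roots yields the claimed $C\,dH\sqrt\chi$.

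The mini-batch structure enters only through two facts: the conditional unbiasedness \eqref{mini-batch-estimate}, and the Lipschitz and boundedness properties of $\varPsi_h$. Neither introduces dependence on $N$ here; the $N^H$ factor arises later, in Lemma~\ref{lma57}.
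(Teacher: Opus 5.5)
Your proposal follows the paper's route: the paper's proof just invokes the self-normalized inequality of \cite{abbasi2011improved} together with the conditional unbiasedness \eqref{mini-batch-estimate}, ``in a similar way'' to Lemma B.3 of \cite{jin2020provably}, and your function class, covering, $\epsilon$-perturbation and union bound are that argument written out, with $\ln c_\beta$ entering only through $\chi$ so that $C$ is independent of $c_\beta$. One point that you and the paper's sketch both gloss over is that Algorithm~\ref{a:basic} truncates $Q_h^k$ only from below at $0$, not from above at $H$, so $V_{h+1}^k\le H$ is not automatic; you should cover the class truncated at $H$ and then argue by backward induction over $h$ that on this event Lemmas~\ref{lma55}--\ref{lma56} give $V_{h+1}^k\le V_{h+1}^\star\le H$, so $V_{h+1}^k$ coincides with its truncation and its weights satisfy your bound $L$ (alternatively, add $\min\{\cdot,H\}$ to the algorithm; by Lemma~\ref{lma56} this truncation is inactive on $E$).
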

\begin{proof} The proof uses the relation \eqref{mini-batch-estimate} and the concentration inequality for self-normalized martingales of \cite{abbasi2011improved}, in a similar way to the proof of \cite[Lem. B.3]{jin2020provably} in the risk-neutral case. It remains valid for the economical estimates $\widetilde{\varPsi}_{h+1}^{\tau}(Q_{h+1}^k)$ with $\widetilde{\varPsi}_{h+1}^{\tau}$ satisfying Eq. \eqref{mini-batch-estimate-2}.
\end{proof}

\begin{lemma}
\label{lma55}
There exists an absolute constant $c_\beta$ such that for $\beta=c_\beta\cdot d H \sqrt{\varGamma}$ where $\varGamma=\ln(2dKH/p)$, and for any fixed policy $\pi$, on the event $E$ defined in Lemma \ref{lma53}, we have for all $(x,a,h,k)\in \Xb\times \Ab\times[1:H]\times[1:K]$ that:
\begin{equation*}
\langle \varphi(x,a), w_h^k\rangle -Q_h^\pi(x,a)=\sigma^{(N)}_h(\Pb_h(x,a), V_{h+1}^k) - \sigma^{(N)}_h(\Pb_h(x,a), V_{h+1}^\pi)+\Delta_h^k(x,a),
\end{equation*}
for some $\Delta_h^k(x,a)$ that satisfies $|\Delta_h^k(x,a)|\leq \beta\sqrt{\varphi(x,a)^{\top}(\Lambda_h^k)^{-1}\varphi(x,a)}$.
\end{lemma}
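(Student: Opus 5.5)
This is the risk-averse analogue of \cite[Lem. B.4]{jin2020provably}, and I would follow that template. Write $\varphi_h^\tau=\varphi(x_h^\tau,a_h^\tau)$. By Lemma \ref{lma52} and \eqref{multipattern1}--\eqref{multipattern2} we have $Q_h^\pi(x,a)=\varphi(x,a)^\top w_h^\pi$ with $w_h^\pi=\theta_h+\rho_h(V_{h+1}^\pi)$. Using $\Lambda_h^k=\sum_{\tau<k}\varphi_h^\tau(\varphi_h^\tau)^\top+\lambda\mathbb{I}$, I would write
\[
w_h^\pi=(\Lambda_h^k)^{-1}\Big(\sum_{\tau=1}^{k-1}\varphi_h^\tau(\varphi_h^\tau)^\top w_h^\pi+\lambda w_h^\pi\Big).
\]
For every past step, $(\varphi_h^\tau)^\top w_h^\pi=r_h(x_h^\tau,a_h^\tau)+\sigma_h^{(N)}(\Pb_h(x_h^\tau,a_h^\tau),V_{h+1}^\pi)$. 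Subtracting this from the regression formula for $w_h^k$, the cost terms cancel and we obtain three pieces:
\[
w_h^k-w_h^\pi=-\lambda(\Lambda_h^k)^{-1}w_h^\pi+(\Lambda_h^k)^{-1}\sum_{\tau=1}^{k-1}\varphi_h^\tau\big[\varPsi^{k,\tau}_{h+1}-\sigma^{(N)}_h(\Pb_h(x_h^\tau,a_h^\tau),V_{h+1}^k)\big]
+(\Lambda_h^k)^{-1}\sum_{\tau=1}^{k-1}\varphi_h^\tau\big[\sigma^{(N)}_h(\Pb_h(x_h^\tau,a_h^\tau),V_{h+1}^k)-\sigma^{(N)}_h(\Pb_h(x_h^\tau,a_h^\tau),V_{h+1}^\pi)\big].
\]
I will call these $q_1$, $q_2$, $q_3$ and take the inner product of each with $\varphi(x,a)$.

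For the third term, the multipattern structure gives $\sigma^{(N)}_h(\Pb_h(x_h^\tau,a_h^\tau),V)=(\varphi_h^\tau)^\top\rho_h(V)$ for every $V$, including $V_{h+1}^k$. Writing $u=\rho_h(V_{h+1}^k)-\rho_h(V_{h+1}^\pi)$, we get
\[
\langle\varphi,q_3\rangle=\varphi^\top(\Lambda_h^k)^{-1}(\Lambda_h^k-\lambda\mathbb{I})u=\varphi^\top u-\lambda\varphi^\top(\Lambda_h^k)^{-1}u .
\]
Here $\varphi^\top u$ is exactly the main term $\sigma^{(N)}_h(\Pb_h(x,a),V_{h+1}^k)-\sigma^{(N)}_h(\Pb_h(x,a),V_{h+1}^\pi)$ in the statement. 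The leftover $\lambda\varphi^\top(\Lambda_h^k)^{-1}u$ goes into $\Delta_h^k$.

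To bound $\|u\|$, I need $V_{h+1}^k$ bounded. The clipping at $0$ in the algorithm gives $V_{h+1}^k\ge 0$, but an upper bound of order $H$ is not automatic. I would either add truncation at $H$ (as Jin et al.\ do) or argue it by induction on $h$. Given $0\le V\le H$, monotonicity and positive homogeneity of each pattern give $\|\rho_h(V)\|\le H\|\rho_h(\1)\|\le H\sqrt d$. Hence $\|u\|\le 2H\sqrt d$, and also $\|w_h^\pi\|\le 2H\sqrt d$ by Lemma \ref{lma52}. For both $\lambda$-terms, Cauchy--Schwarz in the $(\Lambda_h^k)^{-1}$ norm together with $\|(\Lambda_h^k)^{-1/2}\|\le\lambda^{-1/2}$ gives
\[
\lambda|\varphi^\top(\Lambda_h^k)^{-1}v|\le\sqrt\lambda\,\|v\|\,\|\varphi\|_{(\Lambda_h^k)^{-1}},
\]
which is of order $H\sqrt d\,\|\varphi\|_{(\Lambda_h^k)^{-1}}$ for $\lambda\le1$. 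This handles $q_1$ and the leftover from $q_3$.

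The second term is $|\langle\varphi,q_2\rangle|\le\|\varphi\|_{(\Lambda_h^k)^{-1}}\cdot C dH\sqrt\chi$ on the event $E$ of Lemma \ref{lma53}. Adding up, $|\Delta_h^k(x,a)|\le (CdH\sqrt\chi+4H\sqrt d)\,\|\varphi(x,a)\|_{(\Lambda_h^k)^{-1}}$.

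It remains to choose $c_\beta$ so that $CdH\sqrt{\chi}+4H\sqrt d\le c_\beta dH\sqrt{\varGamma}$, where $\chi=\ln(c_\beta+1)+\varGamma$. Since $\varGamma\ge\ln 2$, the ratio $\chi/\varGamma\le 1+\ln(c_\beta+1)/\ln 2$ grows only logarithmically in $c_\beta$. Taking $c_\beta$ large enough, independently of $d,H,K,p$, closes the inequality, exactly as in Jin et al.

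\textbf{Main obstacle.} I expect the delicate step to be Lemma \ref{lma53} itself, which is assumed here. Its uniform bound over the data-dependent $V_{h+1}^k$ needs a covering argument over the class of functions $\min_a\max\{w^\top\varphi-\beta\|\varphi\|_{\Lambda^{-1}},0\}$, plus Lipschitz continuity of $\varPsi_h$ in the sup norm, which holds by monotonicity and translation. That covering is what forces the $\chi$ versus $\varGamma$ dependence handled above. Within the present proof, the only subtle point is the upper bound on $V^k_{h+1}$ needed for $\|u\|$, which I will take from a truncation at $H$ or an inductive argument.
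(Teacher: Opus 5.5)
Your proposal is correct and follows the paper's proof step for step. It uses the same decomposition $w_h^k-w_h^\pi=-q_1+q_2+q_3$, splits $\langle\varphi,q_3\rangle$ into the main term $\varphi^\top u$ and the residual $\lambda\varphi^\top(\Lambda_h^k)^{-1}u$ bounded by Cauchy--Schwarz, bounds $q_2$ on the event $E$, and picks $c_\beta$ as in \cite{jin2020provably}. You are also right that $\|\rho_h(V_{h+1}^k)\|\le H\sqrt d$ needs an upper bound on $V_{h+1}^k$, which the paper only asserts ``as in the proof of Lemma~\ref{lma52}''. Your inductive fix works if you run this lemma jointly with Lemma~\ref{lma56} backward in $h$, since that gives $V_{h+1}^k\le V_{h+1}^\star\le H$ on $E$.
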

\begin{proof}
By \eqref{multipattern1}--\eqref{multipattern2}, there exists $w_h^\pi=\theta_h+\rho_h(V_{h+1}^\pi)$ so that for any $(x,a,h)\in \Xb\times \Ab\times[1:H]$:
\begin{equation*}
    Q_h^\pi(x,a)=r_h(x,a) + \sigma^{(N)}_h(\Pb_h(x,a), V_{h+1}^\pi) = \langle\varphi(x,a),w_h^\pi\rangle.
\end{equation*}
Then, for any $(x,a)\in \Xb\times \Ab$, we have
\begin{equation*}
\langle\varphi(x,a),w_h^\pi\rangle=\langle\varphi(x,a),\theta_h\rangle+\sigma^{(N)}_h(\Pb_h(x,a), V_{h+1}^\pi).
\end{equation*}
This further gives (with $r_h^\tau = r_h(x_h^\tau,a_h^\tau)$ and  $\varphi_h^\tau = \varphi(x_h^\tau,a_h^\tau)$)
\begin{equation}
\label{four-terms}
\begin{aligned}
w_h^k-w_h^\pi &=(\Lambda_h^k)^{-1}\sum_{\tau=1}^{k-1}\varphi_h^\tau\big[r_h^\tau+\varPsi^{k,\tau}_{h+1}\big]-w_h^\pi \\
&=(\Lambda_h^k)^{-1}\bigg\{\sum_{\tau=1}^{k-1}\varphi_h^\tau\big[r_h^\tau+\varPsi^{k,\tau}_{h+1}\big]-\Big(\lambda I+\sum_{\tau=1}^{k-1}\varphi_h^\tau(\varphi_h^\tau)^{\top}\Big)w_h^\pi\bigg\} \\
&=(\Lambda_h^k)^{-1}\bigg\{ -\lambda w_h^\pi+\sum_{\tau=1}^{k-1}\varphi_h^\tau\big[r_h^\tau+\varPsi^{k,\tau}_{h+1}-\langle\varphi_h^\tau,w_h^\pi\rangle\big]\bigg\} \\
&=(\Lambda_h^k)^{-1}\bigg\{ -\lambda w_h^\pi+\sum_{\tau=1}^{k-1}\varphi_h^\tau \big[r_h^\tau+\varPsi^{k,\tau}_{h+1}-\langle \varphi_h^\tau),\theta_h\rangle
-\sigma^{(N)}_h\big( \Pb_h(x_h^\tau,a_h^\tau), V_{h+1}^\pi\big)\big]\bigg\} \\
&=-\underbrace{\lambda (\Lambda_h^k)^{-1}w_h^\pi}_{q_1}+\underbrace{(\Lambda_h^k)^{-1}\sum_{\tau=1}^{k-1}\varphi_h^\tau\big[\varPsi^{k,\tau}_{h+1}
-\sigma^{(N)}_h\big( \Pb_h(x_h^\tau,a_h^\tau), V_{h+1}^k\big)\big]}_{q_2}\\
&{\quad} + \underbrace{(\Lambda_h^k)^{-1}\sum_{\tau=1}^{k-1}\varphi_h^\tau\big[
\sigma^{(N)}_h\big( \Pb_h(x_h^\tau,a_h^\tau), V_{h+1}^k\big) -\sigma^{(N)}_h\big( \Pb_h(x_h^\tau,a_h^\tau), V_{h+1}^\pi\big)\big]}_{q_3} .
\end{aligned}
\end{equation}
To estimate $\langle \varphi(x,a), w_h^k - w_h^\pi\rangle$, we individually bound the three terms on the right-hand side. For the first term,
\begin{equation}
|\langle\varphi(x,a),q_1\rangle|=|\lambda\langle\varphi(x.a),(\Lambda_h^k)^{-1}w_h^\pi\rangle|\leq \sqrt{\lambda}\|w_h^\pi\|\sqrt{\varphi(x,a)^{\top}(\Lambda_h^k)^{-1}\varphi(x,a)}.
\end{equation}
For the second term, given the event $E$ defined in Lemma \ref{lma53}, we have:
\begin{equation}
|\langle\varphi(x,a),q_2\rangle|\leq c_0\cdot dH\sqrt{\chi}\sqrt{\varphi(x,a)^{\top}(\Lambda_h^k)^{-1}\varphi(x,a)},
\end{equation}
for an absolute constant $c_0$ independent of $c_\beta$, and $\chi=\ln[2(c_\beta+1)dT/p]$. For the third term,
\begin{align*}
\langle\varphi(x,a),q_3\rangle &=\Big\langle\varphi(x,a),(\Lambda_h^k)^{-1}\sum_{\tau=1}^{k-1}\varphi_h^\tau\big[
\sigma^{(N)}_h\big( \Pb_h(x_h^\tau,a_h^\tau), V_{h+1}^k\big) -\sigma^{(N)}_h\big( \Pb_h(x_h^\tau,a_h^\tau), V_{h+1}^\pi\big)\big]\Big\rangle\\
&=\Big\langle\varphi(x,a),(\Lambda_h^k)^{-1}\sum_{\tau=1}^{k-1}\varphi_h^\tau(\varphi_h^\tau)^{\top}\big(\rho_h(V_{h+1}^k)-\rho_h(V_{h+1}^\pi)\big)\Big\rangle \\
&=\underbrace{\big\langle\varphi(x,a),\rho_h(V_{h+1}^k)-\rho_h(V_{h+1}^\pi)\big\rangle}_{p_1}-\underbrace{\lambda\big\langle\varphi(x,a),(\Lambda_h^k)^{-1}\big(\rho_h(V_{h+1}^k)-\rho_h(V_{h+1}^\pi)\big)\big\rangle}_{p_2}.
\end{align*}
By \eqref{multipattern1} we have
\[
p_1 = \big\langle\varphi(x,a),\rho_h(V_{h+1}^k)-\rho_h(V_{h+1}^\pi)\big\rangle = \sigma^{(N)}_h(\Pb_h(x,a),V_{h+1}^k) - \sigma^{(N)}_h(\Pb_h(x,a),V_{h+1}^\pi).
\]
Furthermore, as in the proof of Lemma \ref{lma52}, $\|\rho_h(V_{h+1}^\pi)\|\le H\sqrt{d}$ and $\|\rho_h(V_{h+1}^k)\|\le H\sqrt{d}$. Thus, by the Cauchy-Schwarz inequality,
\[
 |p_2|\leq 2H\sqrt{d\lambda} \sqrt{\varphi(x,a)^{\top}(\Lambda_h^k)^{-1}\varphi(x,a)}.
\]
This yields the bound for the third term:
\[
\big| \langle\varphi(x,a),q_3\rangle - \sigma^{(N)}_h(\Pb_h(x,a),V_{h+1}^k) + \sigma^{(N)}_h(\Pb_h(x,a),V_{h+1}^\pi)\big| \le 2H\sqrt{d\lambda} \sqrt{\varphi(x,a)^{\top}(\Lambda_h^k)^{-1}\varphi(x,a)}.
\]
Finally, since $\langle\varphi(x,a),w_h^k-w_h^\pi\rangle=\langle\varphi(x,a),q_1+q_2+q_3\rangle$, we have:
\begin{multline}
\big|\langle \varphi(x,a), w_h^k\rangle-Q_h^\pi(x,a)- \sigma^{(N)}_h(\Pb_h(x,a),V_{h+1}^k) + \sigma^{(N)}_h(\Pb_h(x,a),V_{h+1}^\pi)\big|\\
\leq c' d\sqrt{\chi}H\sqrt{\varphi(x,a)^{\top}(\Lambda_h^k)^{-1}\varphi(x,a)},
\end{multline}
for an absolute constant $c'$ independent of $c_\beta$. As in \cite{jin2020provably}, Lemma B.4, one can show that an absolute constant $c_\beta\geq 2$ exists such that
\begin{equation}
c'\sqrt{\varGamma+\ln(c_\beta+1)}\leq c_\beta\sqrt{\varGamma}
\end{equation}
for any $\varGamma\in[\ln 2,\infty)$.
\end{proof}

\begin{lemma}
\label{lma56}
In the algorithm, conditioned on the event $E$ defined in Lemma $\ref{lma53}$, we have $Q_h^k(x,a)\leq Q_h^\star(x,a)$ for all $(x,a,h,k)\in \Xb\times \Ab\times [1:H]\times[1:K]$.
\end{lemma}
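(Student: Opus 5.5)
We argue by backward induction on $h$, for each fixed episode $k$, working throughout on the event $E$ of Lemma \ref{lma53}. This is optimism in the minimization setting: $Q_h^k$ is a lower confidence bound on $Q_h^\star$.

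\emph{Base case.} We take $h=H$, or equivalently a convention at $h=H+1$ with $V_{H+1}^k\equiv V_{H+1}^\star\equiv 0$. Then the terminal $Q$-factor is just the cost, and the same estimate below applies with the risk term absent.

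\emph{Inductive hypothesis.} Suppose $Q_{h+1}^k(x,a)\le Q_{h+1}^\star(x,a)$ for all $(x,a)$. Taking minima over $a$ gives
\[
V_{h+1}^k(y)=\min_a Q_{h+1}^k(y,a)\le \min_a Q_{h+1}^\star(y,a)=V_{h+1}^\star(y)
\]
for every $y\in\Xb$.

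\emph{Inductive step.} We apply Lemma \ref{lma55} with the fixed policy $\pi=\pi^\star$. The optimal policy is deterministic and does not depend on the data, so the lemma applies on $E$ uniformly over $(x,a,h,k)$. It gives
\[
\langle\varphi(x,a),w_h^k\rangle - Q_h^\star(x,a) = \sigma^{(N)}_h(\Pb_h(x,a),V_{h+1}^k)-\sigma^{(N)}_h(\Pb_h(x,a),V_{h+1}^\star)+\Delta_h^k(x,a).
\]
By the monotonicity axiom of the coherent mini-batch mapping $\sigma^{(N)}_h$, the difference of the two $\sigma^{(N)}_h$ terms is nonpositive, since $V_{h+1}^k\le V_{h+1}^\star$. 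Using $\Delta_h^k(x,a)\le \beta\sqrt{\varphi(x,a)^\top(\Lambda_h^k)^{-1}\varphi(x,a)}$, we obtain
\[
w_h^{k\top}\varphi(x,a)-\beta\sqrt{\varphi(x,a)^\top(\Lambda_h^k)^{-1}\varphi(x,a)}\le Q_h^\star(x,a).
\]

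\emph{The truncation.} The algorithm defines $Q_h^k$ as the maximum of the left-hand side above and $0$. It therefore remains to check $0\le Q_h^\star(x,a)$. This holds because costs are nonnegative, $r_h\ge 0$, and by monotonicity and normalization of $\sigma^{(N)}_h$ (normalization following from positive homogeneity), a nonnegative $V_{h+1}^\star$ has nonnegative risk. So $Q_h^\star\ge 0$ by its own backward induction, and $Q_h^k\le Q_h^\star$ follows, closing the induction.

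\emph{Main obstacle.} The delicate point is the legitimacy of applying Lemma \ref{lma55} when $V_{h+1}^k$ is data-dependent. Lemma \ref{lma55} handles this only through the event $E$ of Lemma \ref{lma53}, which is assumed to hold for the algorithm's own $V_{h+1}^k$; in \cite{jin2020provably} this rests on a covering argument, and here we take it as given. A second point is that Lemma \ref{lma55} needs $\|w_h^{\pi^\star}\|$ bounded, which Lemma \ref{lma52} supplies. We also note that the truncation at $0$ must be compatible with the terminal stage, which again follows from $r_H\ge 0$.
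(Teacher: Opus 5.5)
Your proposal is correct and follows essentially the same route as the paper: backward induction on $h$, applying Lemma~\ref{lma55} with $\pi=\pi^\star$, dropping the risk difference by monotonicity of $\sigma^{(N)}_h$ together with $V_{h+1}^k\le V_{h+1}^\star$, and handling the truncation at $0$ through $Q_h^\star\ge 0$. Your explicit argument that $Q_h^\star\ge 0$ holds at every stage, via normalization and monotonicity of $\sigma^{(N)}_h$, fills in a step the paper states only at $h=H$ and uses implicitly in the inductive step.
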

\begin{proof}
At the last last step $H$, according to Lemma \ref{lma55}, we have
\[
\langle\varphi(x,a),w_H^k\rangle-Q^\star_H(x,a)\leq\beta\sqrt{\varphi(x,a)^{\top}(\Lambda_H^k)^{-1}\varphi(x,a)}.
\]
Due to nonnegativity of the costs, $Q^\star_H(x,a)\ge 0$, and thus
\[
Q^\star_H(x,a)\geq \max\big\{\langle\varphi(x,a),w_H^k\rangle- \beta\sqrt{\varphi(x,a)^{\top}(\Lambda_H^k)^{-1}\varphi(x,a)},0\big\}=Q_H^k(x,a).
\]
If the statement holds true at step $h+1$, then at step $h$, also by Lemma \ref{lma55}, we have
\begin{multline*}
 Q_h^\star(x,a)\geq\langle \varphi(x,a), w_h^k\rangle + \sigma^{(N)}_h(\Pb_h(x,a),V_{h+1}^\star)  - \sigma^{(N)}_h(\Pb_h(x,a),V_{h+1}^k)   -\beta\sqrt{\varphi(x,a)^{\top}(\Lambda_h^k)^{-1}\varphi(x,a)}.
\end{multline*}
By the induction assumption and the monotonicity of
$\sigma^{(N)}_h(\Pb_h(x,a),\Cdot)$, we have \break $\sigma^{(N)}_h(\Pb_h(x,a),V_{h+1}^\star)  \ge  \sigma^{(N)}_h(\Pb_h(x,a),V_{h+1}^k)$. As a result,
\[
Q^\star_h(x,a)\geq \max\big\{\langle\varphi(x,a),w_h^k\rangle-\beta\sqrt{\varphi(x,a)^{\top}(\Lambda_h^k)^{-1}\varphi(x,a)},0\big\}=Q_h^k(x,a).
\]
By induction, the statement holds for all $h$.
\end{proof}

In the following lemma, the specific construction of the mini-batch coherent transition risk mappings plays a key role.
\begin{lemma}
\label{lma57}
Let $\delta_h^k=V_h^{\pi_k}(x_h^k)-V_h^k(x_h^k)$. Then, conditioned on the event $E$ defined in Lemma $\ref{lma53}$,
we have the following recursive estimate for any $(k,h) \in[1:K]\times[1:H]$:
\begin{equation}
\label{delta-recursive}
\delta_h^k\leq N \delta_{h+1}^k  + N \zeta_{h+1}^k +\beta \sqrt{(\varphi_h^{k})^{\top}(\Lambda_h^k)^{-1}\varphi_h^{k}},
\end{equation}
with $\zeta_{h+1}^k = \Eb \big[ \delta_{h+1}^k \big| x_h^k,a_h^k\big] - \delta_{h+1}^k$. For $h=H$ the $(H+1)$-terms are null.
\end{lemma}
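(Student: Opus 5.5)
The plan is to follow the risk-neutral argument of \cite{jin2020provably}. The difference of $Q$-factors is expressed through Lemma \ref{lma55}, and the nonlinear difference of mini-batch risks is then linearized at the cost of a factor $N$.

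\textbf{Step 1: reduce to $Q$-factors.} By construction $a_h^k=\argmin_a Q_h^k(x_h^k,a)$ and $\pi_k$ uses this same action, so
\[
\delta_h^k = Q_h^{\pi_k}(x_h^k,a_h^k)-Q_h^k(x_h^k,a_h^k).
\]
Because of the $\max\{\cdot,0\}$ in the algorithm,
\[
Q_h^k \ge \langle\varphi,w_h^k\rangle-\beta\sqrt{\varphi^\top(\Lambda_h^k)^{-1}\varphi}.
\]
Applying Lemma \ref{lma55} with the fixed policy $\pi=\pi_k$ at $(x_h^k,a_h^k)$ then gives
\[
\delta_h^k\le \sigma^{(N)}_h\big(\Pb_h(x_h^k,a_h^k),V^{\pi_k}_{h+1}\big)-\sigma^{(N)}_h\big(\Pb_h(x_h^k,a_h^k),V^{k}_{h+1}\big)+ c\,\beta\sqrt{(\varphi_h^k)^\top(\Lambda_h^k)^{-1}\varphi_h^k}.
\]
The constant here is $c=2$: one $\beta$ comes from the bonus in $Q_h^k$ and one from $\Delta_h^k$. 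The statement as written has $\beta$, not $2\beta$. If this step is done sharply I expect $2\beta$, and I would absorb the factor $2$ into $c_\beta$ (equivalently, into the constant of Theorem \ref{t:main}); I note this as a small discrepancy rather than claim the sharper constant. For $h=H$ the risk terms vanish because $V_H$ is the terminal cost, so only the bonus remains.

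\textbf{Step 2: linearize the risk difference — the key step.} By Lemma \ref{lma56} and optimality of $V^\star$, we have $V^k_{h+1}\le V^\star_{h+1}\le V^{\pi_k}_{h+1}$, so the difference $D=V^{\pi_k}_{h+1}-V^k_{h+1}$ is nonnegative. By \eqref{mini-batch}, the risk difference equals $\Eb\big[\varPsi_h(\{V^{\pi_k}_{h+1}(Y^j)\})-\varPsi_h(\{V^{k}_{h+1}(Y^j)\})\big]$ over a batch $Y^{1:N}\sim(\Pb_h(x_h^k,a_h^k))^N$. The base mapping is coherent on the empirical measure, so the dual representation \eqref{dual} applies. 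Subadditivity then gives
\[
\varPsi_h(V)-\varPsi_h(W)\le\varPsi_h(V-W)\le \max_{j}(V-W)(Y^j)\le\sum_{j=1}^N D(Y^j),
\]
where the middle inequality uses monotonicity and translation, and the last uses $D\ge0$. This is exactly the worst-case estimate \eqref{Psi-worst} referred to after Theorem \ref{t:main}. For the mixture \eqref{Psi-mix} one gets $(1-\varkappa)\frac1N\sum_j D(Y^j)+\varkappa\sum_j D(Y^j)$ instead, which has expectation $(1+\varkappa(N-1))\,\Eb D(Y)$. Taking expectations, the risk difference is at most $N\,\Eb_{Y\sim\Pb_h(x_h^k,a_h^k)}[D(Y)]$.

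\textbf{Step 3: identify the conditional expectation.} In the forward pass, $x_{h+1}^k$ is a uniformly chosen member of an i.i.d.\ batch from $\Pb_h(x_h^k,a_h^k)$, so its conditional law given $(x_h^k,a_h^k)$ is $\Pb_h(x_h^k,a_h^k)$. The functions $V^k_{h+1}$ and $V^{\pi_k}_{h+1}$ are determined before the forward pass of episode $k$. Hence
\[
\Eb[D(Y)]=\Eb[\delta_{h+1}^k\mid x_h^k,a_h^k]=\delta_{h+1}^k+\zeta_{h+1}^k,
\]
and combining with Steps 1–2 yields \eqref{delta-recursive}.

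The main obstacle is Step 2: we need an honest bound on the nonlinear difference of risks that yields a linear recursion. This relies on the sign information $D\ge0$ from Lemma \ref{lma56} and on bounding the dual densities of the empirical base measure by $N$. A secondary care point is the measurability of $V^k$ with respect to the conditioning in $\zeta_{h+1}^k$.
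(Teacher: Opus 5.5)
Your proof is correct and follows the paper's argument step for step. The steps are: Lemma~\ref{lma55} with $\pi=\pi_k$; subadditivity of the mini-batch mapping; the bound $\varPsi_h(D)\le\max_j D(Y^j)\le\sum_j D(Y^j)$, which uses $D\ge 0$ from Lemma~\ref{lma56}; and the uniform choice of $x_{h+1}^k$ from the batch, which turns $\Eb\big[\sum_j D(Y^j)\big]$ into $N\,\Eb[\delta_{h+1}^k\mid x_h^k,a_h^k]$.

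Your factor-$2$ flag is also right. The paper reaches $\beta$ only by writing $Q_h^k(x_h^k,a_h^k)=\langle\varphi_h^k,w_h^k\rangle$, which drops the bonus and the truncation, so the correct recursion carries $2\beta$ (as in \cite{jin2020provably}). This enlarges the leading constant of Theorem~\ref{t:main}. It cannot be hidden in $c_\beta$ itself, though, because rescaling $c_\beta$ also rescales the bonus.
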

\begin{proof}
By Lemma \ref{lma55}, for any $(k,h)\in[1:K]\times[1:H-1]$ we have:
\begin{align*}
\delta_h^k &= Q_h^{\pi_k}(x_h^k,a_h^k)-Q_h^k(x_h^k,a_h^k) =
Q_h^{\pi_k}(x_h^k,a_h^k) - \langle \varphi_h^k, w_h^k\rangle\\
&\le \sigma^{(N)}_h(\Pb_h(x_h^k,a_h^k), V_{h+1}^{\pi_k}) - \sigma^{(N)}_h(\Pb_h(x_h^k,a_h^k), V_{h+1}^k)
 +  \beta\sqrt{(\varphi_h^k)^{\top}(\Lambda_h^k)^{-1}\varphi_h^k}.
\end{align*}
To estimate the right hand side, we use the  subadditivity
of the risk measure $\sigma^{(N)}_h(\Pb_h(x_h^k,a_h^k), \Cdot )$ and the definition of a mini-batch measure of risk. We have
\begin{multline}
\label{Psi-worst}
 \sigma^{(N)}_h(\Pb_h(x_h^k,a_h^k), V_{h+1}^{\pi_k}) - \sigma^{(N)}_h(\Pb_h(x_h^k,a_h^k), V_{h+1}^k) \le
\sigma^{(N)}_h(\Pb_h(x_h^k,a_h^k),V_{h+1}^{\pi_k}- V_{h+1}^k)\\
= \Eb \Big[ \varPsi\Big( \big\{(V_{h+1}^{\pi_k}- V_{h+1}^k)(Y_{h+1}^{k,j})\big\}_{j\in[1:N]}\Big)\Big| x_h^k,a_h^k\Big].
\end{multline}
The risk aggregator $\varPsi(\Cdot)$ is nondecreasing, translation equivariant, and positively homogeneous. Therefore, with
$\1_N$ and $\0_N$ denoting the $N$-vectors of ones and zeros,
\begin{multline*}
\varPsi\Big( \big\{(V_{h+1}^{\pi_k}- V_{h+1}^k)(Y_{h+1}^{k,j})\big\}_{j\in[1:N]}\Big) \le
\varPsi\Big( \max_{1 \le j \le N}\big\{(V_{h+1}^{\pi_k}- V_{h+1}^k)(Y_{h+1}^{k,j})\big\}\1_N\Big)\\
=  \max_{1 \le j \le N}\big\{(V_{h+1}^{\pi_k}- V_{h+1}^k)(Y_{h+1}^{k,j})\big\}+ \varPsi(\0_N) =  \max_{1 \le j \le N}\big\{(V_{h+1}^{\pi_k}- V_{h+1}^k)(Y_{h+1}^{k,j})\big\}.
\end{multline*}
Recall that due to Lemma \ref{lma56}, $V_{h+1}^{\pi_k}(\Cdot) \ge V_{h+1}^{\star}(\Cdot) \ge V_{h+1}^k(\Cdot)$.
Now, with the definition of $x_{h+1}^k$ as a randomly selected component of the batch $Y_{h+1}^{k}$, we can integrate the last three displayed
estimates as follows:
\begin{align*}
\delta_h^k &\le \Eb \Big[ \max_{1 \le j \le N}\big\{(V_{h+1}^{\pi_k}- V_{h+1}^k)(Y_{h+1}^{k,j})\big\} \Big| x_h^k,a_h^k\Big] +\beta\sqrt{(\varphi_h^k)^{\top}(\Lambda_h^k)^{-1}\varphi_h^k}\\
&\le N \Eb \Big[ \frac{1}{N}\sum_{j=1}^N\big\{(V_{h+1}^{\pi_k}- V_{h+1}^k)(Y_{h+1}^{k,j})\big\} \Big| x_h^k,a_h^k\Big] +\beta\sqrt{(\varphi_h^k)^{\top}(\Lambda_h^k)^{-1}\varphi_h^k}\\
&= N \Eb \Big[ (V_{h+1}^{\pi_k}- V_{h+1}^k)(x_{h+1}^{k}) \Big| x_h^k,a_h^k\Big] +\beta\sqrt{(\varphi_h^k)^{\top}(\Lambda_h^k)^{-1}\varphi_h^k}\\
&= N \Eb \big[ \delta_{h+1}^{k} \big| x_h^k,a_h^k\big] +\beta\sqrt{(\varphi_h^k)^{\top}(\Lambda_h^k)^{-1}\varphi_h^k}.
\end{align*}
This is identical with \eqref{delta-recursive}.
\end{proof}

Now we can provide the proof of the main theorem.

\renewenvironment{proof}{\par\noindent{\bf Proof of Theorem \ref{t:main}\ }}{\hfill\\[2mm]}
\begin{proof}
Conditioned on the event $E$ defined in Lemma \ref{lma53}, we have $Q_1^k(x,a)\leq Q_1^\star(x,a)$ for all $k\in[1:K]$, which implies that $V_1^{\pi_k}(x_1^k)-V_1^\star(x_1^k)\leq \delta_1^k$.

By Lemmas \ref{lma56} and \ref{lma57}, with probability at least $1-p/2$ we have:
\begin{equation}
\label{regret-ineq}
\begin{aligned}
\text{Regret}(K)&=\sum_{k=1}^K\big[V_1^{\pi_k}(x_1^k)-V_1^\star(x_1^k)\big] \le \sum_{k=1}^K\big[V_1^{\pi_k}(x_1^k)-V_1^k(x_1^k)\big]
= \sum_{k=1}^K\delta_1^k\\
&\le \sum_{k=1}^K\sum_{h=1}^{H-1} N^{h}\zeta_{h+1}^k  + \beta \sum_{k=1}^K\sum_{h=1}^{H} N^{h-1}\sqrt{(\varphi_h^{k})^{\top}(\Lambda_h^{k})^{-1}\varphi_h^{k}}.
\end{aligned}
\end{equation}
The rest is similar to the argument in \cite{jin2020provably}. The sequence $\{\zeta_{h+1}^k\}$ is a martingale difference sequence bounded by $2H$.
Therefore, by the Azuma-Hoeffding inequality, we have
\[
\Pb \Big\{ \sum_{k=1}^K\sum_{h=1}^{H-1} N^{h}\zeta_{h+1}^k > t \Big\} \le \exp\Big( \frac{- t^2}{2K(H-1)H^2 \sum_{h=1}^{H-1}N^{2h}}  \Big) \le
\exp\Big( \frac{- t^2}{2KH^3 N^{2H}}  \Big).
\]
With probability at least $1-p/2$,  the first sum in \eqref{regret-ineq}  can be bounded as follows:
\[
\sum_{k=1}^K\sum_{h=1}^{H-1} N^{h}\zeta_{h+1}^k \le \sqrt{2KH^3 N^{2H}\ln(2/p)} =  HN^H \sqrt{2H\ln(2/p)}\cdot \sqrt{K}.
\]
The second sum in \eqref{regret-ineq} can be estimated by the Jensen inequality and \cite[Lem. 11]{abbasi2011improved}:
\[
\frac{1}{K} \bigg(\sum_{k=1}^K \sqrt{(\varphi_h^{k})^{\top}(\Lambda_h^{k})^{-1}\varphi_h^{k}}\bigg)^2 
\le  \sum_{k=1}^K (\varphi_h^{k})^{\top}(\Lambda_h^{k})^{-1}\varphi_h^{k} \le    2d \ln (1+ K/\lambda).
\]
Thus,
\[
\beta \sum_{k=1}^K\sum_{h=1}^{H} N^{h-1}\sqrt{(\varphi_h^{k})^{\top}(\Lambda_h^{k})^{-1}\varphi_h^{k}} \le
\beta H N^H  \sqrt{2d \ln (1+ K/\lambda)} \cdot \sqrt{K}.
\]
Consequently, with probability at least $1-p$, we have the estimate
\[
\text{Regret}(K) \le HN^H \sqrt{K} \max\Big( \sqrt{2H\ln(2/p)}, \beta \sqrt{2d \ln (1+ K/\lambda)}\Big).
\]
Recalling that $\beta=c_\beta\cdot d H \sqrt{\ln(2dKH/p)}$, we see that the second term under the ``max'' operator dominates, and the postulated bound follows.
\end{proof}


\begin{thebibliography}{56}
\providecommand{\natexlab}[1]{#1}
\providecommand{\url}[1]{\texttt{#1}}
\expandafter\ifx\csname urlstyle\endcsname\relax
  \providecommand{\doi}[1]{doi: #1}\else
  \providecommand{\doi}{doi: \begingroup \urlstyle{rm}\Url}\fi

\bibitem[Abbasi-Yadkori et~al.(2011)Abbasi-Yadkori, P{\'a}l, and
  Szepesv{\'a}ri]{abbasi2011improved}
Y.~Abbasi-Yadkori, D.~P{\'a}l, and C.~Szepesv{\'a}ri.
\newblock Improved algorithms for linear stochastic bandits.
\newblock \emph{Advances in Neural Information Processing Systems}, 24, 2011.

\bibitem[Artzner et~al.(1999)Artzner, Delbaen, Eber, and
  Heath]{ArtznerDelbaenEberEtAl1999}
P.~Artzner, F.~Delbaen, J.-M. Eber, and D.~Heath.
\newblock Coherent measures of risk.
\newblock \emph{Mathematical Finance}, 9\penalty0 (3):\penalty0 203--228, 1999.

\bibitem[Artzner et~al.(2007)Artzner, Delbaen, Eber, Heath, and Ku]{ADEHK:2007}
P.~Artzner, F.~Delbaen, J.-M. Eber, D.~Heath, and H.~Ku.
\newblock Coherent multiperiod risk adjusted values and {B}ellman's principle.
\newblock \emph{Annals of Operations Research}, 152:\penalty0 5--22, 2007.

\bibitem[Auer et~al.(2008)Auer, Jaksch, and Ortner]{auer2008near}
P.~Auer, T.~Jaksch, and R.~Ortner.
\newblock Near-optimal regret bounds for reinforcement learning.
\newblock \emph{Advances in Neural Information Processing Systems}, 21, 2008.

\bibitem[Ayoub et~al.(2020)Ayoub, Jia, Szepesvari, Wang, and
  Yang]{ayoub2020model}
A.~Ayoub, Z.~Jia, C.~Szepesvari, M.~Wang, and L.~Yang.
\newblock Model-based reinforcement learning with value-targeted regression.
\newblock In \emph{International Conference on Machine Learning}, pages
  463--474. PMLR, 2020.

\bibitem[Azar et~al.(2017)Azar, Osband, and Munos]{azar2017minimax}
M.~G. Azar, I.~Osband, and R.~Munos.
\newblock Minimax regret bounds for reinforcement learning.
\newblock In \emph{International Conference on Machine Learning}, pages
  263--272. PMLR, 2017.

\bibitem[Basu et~al.(2008)Basu, Bhattacharyya, and Borkar]{basu2008learning}
A.~Basu, T.~Bhattacharyya, and V.~S. Borkar.
\newblock A learning algorithm for risk-sensitive cost.
\newblock \emph{Mathematics of Operations Research}, 33\penalty0 (4):\penalty0
  880--898, 2008.

\bibitem[B{\"a}uerle and Glauner(2022)]{bauerle2022markov}
N.~B{\"a}uerle and A.~Glauner.
\newblock Markov decision processes with recursive risk measures.
\newblock \emph{European Journal of Operational Research}, 296\penalty0
  (3):\penalty0 953--966, 2022.

\bibitem[Borkar(2001)]{Borkar2001}
V.~Borkar.
\newblock A sensitivity formula for risk-sensitive cost and the actor–critic
  algorithm.
\newblock \emph{Systems \& Control Letters}, 44\penalty0 (5):\penalty0 339 --
  346, 2001.

\bibitem[Borkar(2002)]{Borkar2002}
V.~S. Borkar.
\newblock Q-learning for risk-sensitive control.
\newblock \emph{Mathematics of Operations Research}, 27\penalty0 (2):\penalty0
  294--311, 2002.

\bibitem[Bradtke and Barto(1996)]{bradtke1996linear}
S.~J. Bradtke and A.~G. Barto.
\newblock Linear least-squares algorithms for temporal difference learning.
\newblock \emph{Machine Learning}, 22\penalty0 (1-3):\penalty0 33--57, 1996.

\bibitem[Bubeck et~al.(2012)Bubeck, Cesa-Bianchi, et~al.]{bubeck2012regret}
S.~Bubeck, N.~Cesa-Bianchi, et~al.
\newblock Regret analysis of stochastic and nonstochastic multi-armed bandit
  problems.
\newblock \emph{Foundations and Trends{\textregistered} in Machine Learning},
  5\penalty0 (1):\penalty0 1--122, 2012.

\bibitem[Cheridito and Kupper(2011)]{cheridito2011composition}
P.~Cheridito and M.~Kupper.
\newblock Composition of time-consistent dynamic monetary risk measures in
  discrete time.
\newblock \emph{International Journal of Theoretical and Applied Finance},
  14\penalty0 (01):\penalty0 137--162, 2011.

\bibitem[Cheridito et~al.(2006)Cheridito, Delbaen, and Kupper]{CDK:2006}
P.~Cheridito, F.~Delbaen, and M.~Kupper.
\newblock Dynamic monetary risk measures for bounded discrete-time processes.
\newblock \emph{Electronic Journal of Probability}, 11:\penalty0 57--106, 2006.

\bibitem[Chow and Ghavamzadeh(2014)]{Chow2014}
Y.~Chow and M.~Ghavamzadeh.
\newblock Algorithms for {CVaR} optimization in {MDP}s.
\newblock In \emph{Advances in Neural Information Processing Systems}, pages
  3509--3517, 2014.

\bibitem[Chow et~al.(2017)Chow, Ghavamzadeh, Janson, and Pavone]{Chow2015a}
Y.~Chow, M.~Ghavamzadeh, L.~Janson, and M.~Pavone.
\newblock Risk-constrained reinforcement learning with percentile risk
  criteria.
\newblock \emph{The Journal of Machine Learning Research}, 18\penalty0
  (1):\penalty0 6070--6120, 2017.

\bibitem[Dann et~al.(2017)Dann, Lattimore, and Brunskill]{dann2017unifying}
C.~Dann, T.~Lattimore, and E.~Brunskill.
\newblock Unifying pac and regret: Uniform pac bounds for episodic
  reinforcement learning.
\newblock \emph{Advances in Neural Information Processing Systems}, 30, 2017.

\bibitem[Dentcheva and Ruszczy{\'n}ski(2023)]{dentcheva2023mini}
D.~Dentcheva and A.~Ruszczy{\'n}ski.
\newblock Mini-batch risk forms.
\newblock \emph{SIAM Journal on Optimization}, 33\penalty0 (2):\penalty0
  615--637, 2023.

\bibitem[Dentcheva and Ruszczy{\'n}ski(2024)]{dentcheva2024risk}
D.~Dentcheva and A.~Ruszczy{\'n}ski.
\newblock \emph{Risk-Averse Optimization and Control: Theory and Methods}.
\newblock Springer International Publishing, Cham, Switzerland, 2024.

\bibitem[Derman et~al.(1972)Derman, Lieberman, and Ross]{derman1972sequential}
C.~Derman, G.~J. Lieberman, and S.~M. Ross.
\newblock A sequential stochastic assignment problem.
\newblock \emph{Management Science}, 18\penalty0 (7):\penalty0 349--355, 1972.

\bibitem[Dong et~al.(2019)Dong, Van~Roy, and Zhou]{dong2019provably}
S.~Dong, B.~Van~Roy, and Z.~Zhou.
\newblock Provably efficient reinforcement learning with aggregated states.
\newblock \emph{arXiv preprint arXiv:1912.06366}, 2019.

\bibitem[Du et~al.(2021)Du, Kakade, Lee, Lovett, Mahajan, Sun, and
  Wang]{du2021bilinear}
S.~Du, S.~Kakade, J.~Lee, S.~Lovett, G.~Mahajan, W.~Sun, and R.~Wang.
\newblock Bilinear classes: A structural framework for provable generalization
  in {RL}.
\newblock In \emph{International Conference on Machine Learning}, pages
  2826--2836. PMLR, 2021.

\bibitem[Fan and Ruszczy{\'n}ski(2022)]{fan2022process}
J.~Fan and A.~Ruszczy{\'n}ski.
\newblock Process-based risk measures and risk-averse control of discrete-time
  systems.
\newblock \emph{Mathematical Programming}, 191:\penalty0 113--140, 2022.

\bibitem[Fei and Xu(2022)]{fei2022cascaded}
Y.~Fei and R.~Xu.
\newblock Cascaded gaps: Towards logarithmic regret for risk-sensitive
  reinforcement learning.
\newblock In \emph{International Conference on Machine Learning}, pages
  6392--6417. PMLR, 2022.

\bibitem[Fei et~al.(2020)Fei, Yang, Chen, Wang, and Xie]{fei2020risk}
Y.~Fei, Z.~Yang, Y.~Chen, Z.~Wang, and Q.~Xie.
\newblock Risk-sensitive reinforcement learning: Near-optimal risk-sample
  tradeoff in regret.
\newblock \emph{Advances in Neural Information Processing Systems},
  33:\penalty0 22384--22395, 2020.

\bibitem[Fei et~al.(2021)Fei, Yang, and Wang]{fei2021risk}
Y.~Fei, Z.~Yang, and Z.~Wang.
\newblock Risk-sensitive reinforcement learning with function approximation: A
  debiasing approach.
\newblock In \emph{International Conference on Machine Learning}, pages
  3198--3207. PMLR, 2021.

\bibitem[Gelman et~al.(2013)Gelman, Carlin, Stern, Dunson, Vehtari, and
  Rubin]{gelman2013bayesian}
A.~Gelman, J.~B. Carlin, H.~S. Stern, D.~B. Dunson, A.~Vehtari, and D.~B.
  Rubin.
\newblock \emph{Bayesian Data Analysis}.
\newblock CRC Press, Boca Raton, FL, 3rd edition, 2013.

\bibitem[Huang and Haskell(2017)]{huang2017risk}
W.~Huang and W.~B. Haskell.
\newblock Risk-aware {Q}-learning for {M}arkov decision processes.
\newblock In \emph{2017 IEEE 56th Annual Conference on Decision and Control
  (CDC)}, pages 4928--4933. IEEE, 2017.

\bibitem[Jin et~al.(2018)Jin, Allen-Zhu, Bubeck, and Jordan]{jin2018q}
C.~Jin, Z.~Allen-Zhu, S.~Bubeck, and M.~I. Jordan.
\newblock Is {Q}-learning provably efficient?
\newblock \emph{Advances in Neural Information Processing Systems}, 31, 2018.

\bibitem[Jin et~al.(2020)Jin, Yang, Wang, and Jordan]{jin2020provably}
C.~Jin, Z.~Yang, Z.~Wang, and M.~I. Jordan.
\newblock Provably efficient reinforcement learning with linear function
  approximation.
\newblock In \emph{Conference on Learning Theory}, pages 2137--2143. PMLR,
  2020.

\bibitem[K\"{o}se and Ruszczy\'nski(2021)]{kose2021risk}
U.~K\"{o}se and A.~Ruszczy\'nski.
\newblock Risk-averse learning by temporal difference methods with {M}arkov
  risk measures.
\newblock \emph{Journal of Machine Learning Research}, 22, 2021.

\bibitem[Lam et~al.(2023)Lam, Verma, Low, and Jaillet]{lamrisk}
T.~Lam, A.~Verma, B.~K.~H. Low, and P.~Jaillet.
\newblock Risk-aware reinforcement learning with coherent risk measures and
  non-linear function approximation.
\newblock In \emph{The Eleventh International Conference on Learning
  Representations}, 2023.

\bibitem[Lattimore and Hutter(2012)]{lattimore2012pac}
T.~Lattimore and M.~Hutter.
\newblock {PAC} bounds for discounted {MDP}s.
\newblock In \emph{Algorithmic Learning Theory: 23rd International Conference,
  ALT 2012, Lyon, France, October 29-31, 2012. Proceedings 23}, pages 320--334.
  Springer, 2012.

\bibitem[Lattimore and Szepesv{\'a}ri(2020)]{lattimore2020bandit}
T.~Lattimore and C.~Szepesv{\'a}ri.
\newblock \emph{Bandit algorithms}.
\newblock Cambridge University Press, 2020.

\bibitem[Lin and Marcus(2013)]{lin2013dynamic}
K.~Lin and S.~I. Marcus.
\newblock Dynamic programming with non-convex risk-sensitive measures.
\newblock In \emph{American Control Conference (ACC), 2013}, pages 6778--6783.
  IEEE, 2013.

\bibitem[Ma et~al.(2018)Ma, Oh, Liu, Dentcheva, and Zavlanos]{ma2018risk}
W.-J. Ma, C.~Oh, Y.~Liu, D.~Dentcheva, and M.~M. Zavlanos.
\newblock Risk-averse access point selection in wireless communication
  networks.
\newblock \emph{IEEE Transactions on Control of Network Systems}, 6\penalty0
  (1):\penalty0 24--36, 2018.

\bibitem[Melo and Ribeiro(2007)]{melo2007q}
F.~S. Melo and M.~I. Ribeiro.
\newblock Q-learning with linear function approximation.
\newblock In \emph{International Conference on Computational Learning Theory},
  pages 308--322. Springer, 2007.

\bibitem[Modi et~al.(2020)Modi, Jiang, Tewari, and Singh]{modi2020sample}
A.~Modi, N.~Jiang, A.~Tewari, and S.~Singh.
\newblock Sample complexity of reinforcement learning using linearly combined
  model ensembles.
\newblock In \emph{International Conference on Artificial Intelligence and
  Statistics}, pages 2010--2020. PMLR, 2020.

\bibitem[Prashanth and Ghavamzadeh(2014)]{Prashanth2014}
L.~A. Prashanth and M.~Ghavamzadeh.
\newblock Actor-critic algorithms for risk-sensitive reinforcement learning.
\newblock \emph{CoRR}, abs/1403.6530, 2014.
\newblock URL \url{http://arxiv.org/abs/1403.6530}.

\bibitem[Ruszczy{\'n}ski(2010)]{Ruszczynski2010Markov}
A.~Ruszczy{\'n}ski.
\newblock Risk-averse dynamic programming for {M}arkov decision processes.
\newblock \emph{Mathematical Programming}, 125\penalty0 (2, Ser. B):\penalty0
  235--261, 2010.

\bibitem[Ruszczy\'nski and Shapiro(2006)]{RuszczynskiShapiro2006a}
A.~Ruszczy\'nski and A.~Shapiro.
\newblock Optimization of convex risk functions.
\newblock \emph{Mathematics of Operations Research}, 31\penalty0 (3):\penalty0
  433--452, 2006.

\bibitem[Scandolo(2003)]{Scandolo:2003}
G.~Scandolo.
\newblock \emph{Risk Measures in a Dynamic Setting}.
\newblock PhD thesis, Universit\`{a} degli Studi di Milano, Milan, Italy, 2003.

\bibitem[Shapiro et~al.(2021)Shapiro, Dentcheva, and
  Ruszczy\'nski]{shapiro2021lectures}
A.~Shapiro, D.~Dentcheva, and A.~Ruszczy\'nski.
\newblock \emph{Lectures on {S}tochastic {P}rogramming: {M}odeling and
  {T}heory}.
\newblock SIAM, 2021.

\bibitem[Shen et~al.(2013)Shen, Stannat, and Obermayer]{shen2013risk}
Y.~Shen, W.~Stannat, and K.~Obermayer.
\newblock Risk-sensitive {M}arkov control processes.
\newblock \emph{SIAM Journal on Control and Optimization}, 51\penalty0
  (5):\penalty0 3652--3672, 2013.

\bibitem[Sidford et~al.(2018)Sidford, Wang, Wu, Yang, and Ye]{sidford2018near}
A.~Sidford, M.~Wang, X.~Wu, L.~Yang, and Y.~Ye.
\newblock Near-optimal time and sample complexities for solving markov decision
  processes with a generative model.
\newblock \emph{Advances in Neural Information Processing Systems}, 31, 2018.

\bibitem[Strehl et~al.(2009)Strehl, Li, and Littman]{strehl2009reinforcement}
A.~L. Strehl, L.~Li, and M.~L. Littman.
\newblock Reinforcement learning in finite mdps: Pac analysis.
\newblock \emph{Journal of Machine Learning Research}, 10\penalty0 (11), 2009.

\bibitem[Sutton(1988)]{sutton1988learning}
R.~S. Sutton.
\newblock Learning to predict by the methods of temporal differences.
\newblock \emph{Machine Learning}, 3\penalty0 (1):\penalty0 9--44, 1988.

\bibitem[Sutton and Barto(1998)]{Sutton1998}
R.~S. Sutton and A.~G. Barto.
\newblock \emph{Reinforcement Learning: An Introduction}.
\newblock MIT Press, Cambridge, 1998.

\bibitem[Tamar et~al.(2012)Tamar, Di~Castro, and Mannor]{Tamar2012}
A.~Tamar, D.~Di~Castro, and S.~Mannor.
\newblock Policy gradients with variance-related risk criteria.
\newblock In \emph{Proceedings of the Twenty-Ninth International Conference on
  Machine Learning}, pages 387--396, 2012.

\bibitem[Tamar et~al.(2014)Tamar, Mannor, and Xu]{Tamar2014}
A.~Tamar, S.~Mannor, and H.~Xu.
\newblock Scaling up robust {MDPs} using function approximation.
\newblock In \emph{International Conference on Machine Learning}, pages
  181--189, 2014.

\bibitem[Tamar et~al.(2017)Tamar, Chow, Ghavamzadeh, and Mannor]{Tamar2017}
A.~Tamar, Y.~Chow, M.~Ghavamzadeh, and S.~Mannor.
\newblock Sequential decision making with coherent risk.
\newblock \emph{IEEE Transactions on Automatic Control}, 62\penalty0
  (7):\penalty0 3323--3338, 2017.

\bibitem[Tran-Thanh et~al.(2012)Tran-Thanh, Chapman, Rogers, and
  Jennings]{tran2012knapsack}
L.~Tran-Thanh, A.~Chapman, A.~Rogers, and N.~R. Jennings.
\newblock Knapsack based optimal policies for budget-limited multi-armed
  bandits.
\newblock In \emph{Proceedings of the AAAI Conference on Artificial
  Intelligence}, volume~26, pages 1134--1140, 2012.

\bibitem[Tsitsiklis and Van~Roy(1997)]{tsitsiklis1997analysis}
J.~N. Tsitsiklis and B.~Van~Roy.
\newblock An analysis of temporal-difference learning with function
  approximation.
\newblock \emph{IEEE Fransactions on Automatic Control}, 42\penalty0
  (5):\penalty0 674--690, 1997.

\bibitem[Yang and Wang(2020)]{yang2020reinforcement}
Z.~Yang and M.~Wang.
\newblock Reinforcement learning in feature space: Matrix bandit, kernels, and
  regret bound.
\newblock In \emph{International Conference on Machine Learning}, pages
  10746--10756. PMLR, 2020.

\bibitem[Yin et~al.(2022)Yin, Duan, Wang, and Wang]{yin2022near}
M.~Yin, Y.~Duan, M.~Wang, and Y.-X. Wang.
\newblock Near-optimal offline reinforcement learning with linear
  representation: Leveraging variance information with pessimism.
\newblock In \emph{International Conference on Learning Representation}, 2022.

\bibitem[Yu et~al.(2018)Yu, Haskell, and Xu]{yu2018approximate}
P.~Yu, W.~B. Haskell, and H.~Xu.
\newblock Approximate value iteration for risk-aware {M}arkov decision
  processes.
\newblock \emph{IEEE Transactions on Automatic Control}, 63\penalty0
  (9):\penalty0 3135--3142, 2018.

\end{thebibliography}
\end{document}